\title{Interaction-Grounded Learning\\ with Action-Inclusive Feedback}
\author{%
  Tengyang Xie\thanks{equal contribution}\\
  UIUC\\
  \href{mailto:tx10@illinois.edu}{\texttt{tx10@illinois.edu}}\\
  \And
  Akanksha Saran\footnotemark[1]\\
  Microsoft Research, NYC\\
  \href{mailto:akanksha.saran@microsoft.com}{\texttt{akanksha.saran@microsoft.com}}\\
  \And
  Dylan J. Foster\\
  Microsoft Research, New England\\
  \href{mailto:dylanfoster@microsoft.com}{\texttt{dylanfoster@microsoft.com}}\\
  \And
  Lekan Molu\\
  Microsoft Research, NYC\\
  \href{mailto:lekanmolu@microsoft.com}{\texttt{lekanmolu@microsoft.com}}\\
  \And
  Ida Momennejad \\
  Microsoft Research, NYC\\
  \href{mailto:idamo@microsoft.com}{\texttt{idamo@microsoft.com}}\\
  \And
  Nan Jiang\\
  UIUC\\
  \href{mailto:nanjiang@illinois.edu}{\texttt{nanjiang@illinois.edu}}\\
  \And
  Paul Mineiro\\
  Microsoft Research, NYC\\
  \href{mailto:pmineiro@microsoft.com}{\texttt{pmineiro@microsoft.com}}\\
  \And
  John Langford\\
  Microsoft Research, NYC\\
  \href{mailto:jcl@microsoft.com}{\texttt{jcl@microsoft.com}}\\
}
\newcommand{\oldIGL}{IGL (full CI)\xspace}
\newcommand{\newIGL}{AI-IGL\xspace}
\begin{document}

\maketitle

\begin{abstract}
Consider the problem setting of Interaction-Grounded Learning (IGL), in which a learner's goal is to optimally interact with the environment with no explicit reward to ground its policies. The agent observes a context vector, takes an action, and receives a feedback vector---using this information to effectively optimize a policy with respect to a latent reward function.
Prior analyzed approaches fail when the feedback vector contains the action, which significantly limits IGL's success in many potential scenarios such as Brain-computer interface (BCI) or Human-computer interface (HCI) applications.
We address this by creating an algorithm and analysis which allows IGL to work even when the feedback vector contains the action, encoded in any fashion. We provide theoretical guarantees and large-scale experiments based on supervised datasets to demonstrate the effectiveness of the new approach.
\end{abstract}

\section{Introduction}

Most real-world learning problems, such as BCI and HCI problems, are not tagged with rewards.  Consequently, (biological and artificial) learners must infer rewards based on interactions with the environment, which reacts to the learner's actions by generating feedback, but does not provide any explicit reward signal.
This paradigm has been previously studied by researchers~\citep[e.g.,][]{grizou2014calibration,nguyen2021interactive}, including a recent formalization~\citep{xie2021interaction} that proposed the term Interaction-Grounded Learning (IGL).

In IGL, the learning algorithm discovers a grounding for the feedback which implicitly discovers a reward function. An information-theoretic impossibility argument indicates additional assumptions are necessary to succeed. \citet{xie2021interaction} proceed by assuming the action is conditionally independent of the feedback given the reward. However, this is unnatural in many settings such as neurofeedback in BCI~\citep{katyal2014collaborative,mishra2015closed,debettencourt2015closed,munoz2020delineating,akinola2020accelerated,poole2021towards,xu2021accelerating} and multimodal interactive feedback in HCI~\citep{pantic2003toward,vitense2003multimodal,freeman2017multimodal,mott2017improving,duchowski2018gaze,saran2018human,saran2020understanding,zhang2020human,cui2021empathic,gao2021x2t}
where the action proceeds and thus influences the feedback. 
For example, if the feedback is an fMRI recording from a human brain implant in the parietal cortex, and the action of the algorithm is to display a number being imagined by the human (Fig.~\ref{fig:bci}), the parietal cortex feedback will contain information about both perceiving the number being displayed (action) and human's reaction to it for match or mismatch (reward). Another example is determining if a patient in a coma is happy or not, and using their brain signals (feedback) to dissociate the fact that they were asked about being happy (action) and the actual thought that they are happy (reward). An HCI example is a self-calibrating eye tracker used for typing on a screen by an ALS patient~\citep{mott2017improving,gao2021x2t}, where the automated agent calibrates the eye gaze to a certain point on a screen (action) and the human's gaze patterns (feedback) convey a response to the tracker's offset (action) as well as their satisfaction for being able to type a certain key (reward).
If you apply %
the prior IGL approach to such settings, 
it will fail catastrophically because the requirement of conditional independence is essential to its function. This motivates the question:
\begin{quote}\em\centering
    Is it possible to do interaction-ground learning when the feedback has the full information of the action embedded in it?
\end{quote}

We propose a new approach to solve IGL, which we call action-inclusive IGL (AI-IGL), that allows the action to be incorporated into the feedback in arbitrary ways. %
We consider latent reward as playing the role of latent states, which can be further separated via a contrastive learning method (Section~\ref{sec:latent_stat_dis}). Different from the typical latent state discovery in rich-observation reinforcement learning~\citep[e.g.,][]{dann2018oracle,du2019provably,misra2020kinematic}, the IGL setting also requires identifying the semantic meaning of the latent reward states, which is addressed by a symmetry breaking procedure (Section~\ref{sec:sym_brk}). We analyze the theoretical properties of the proposed approach, and we prove that it is guaranteed to learn a near-optimal policy as long as the feedback  satisfies a weaker context conditional independence assumption.  We also empirically evaluate the proposed AI-IGL approach using large-scale experiments on Open-ML's supervised classification datasets~\citep{bischl2021openml}, 
as well as a simulated BCI experiment using real human fMRI data~\citep{ellis2020facilitating}, demonstrating the effectiveness of the proposed approach (Section~\ref{sec:experiments}).
Thus, our findings broaden the scope of applicability for IGL. %
The paper proceeds as follows. In Section~\ref{sec:background}, we present the mathematical formulation for IGL. In Section~\ref{sec:solution_cpt}, we present a contrastive learning perspective for grounding latent reward which helps to expand the applicability of IGL. In Section~\ref{sec:algo_theory}, we state the resulting algorithm AI-IGL. We provide experimental support for the technique in Section~\ref{sec:experiments} using a diverse set of supervised learning datasets %
and a simulated BCI dataset.
We conclude with discussion in Section~\ref{sec:discussion}.

\begin{figure}
    \centering
    \includegraphics[width=0.8\textwidth]{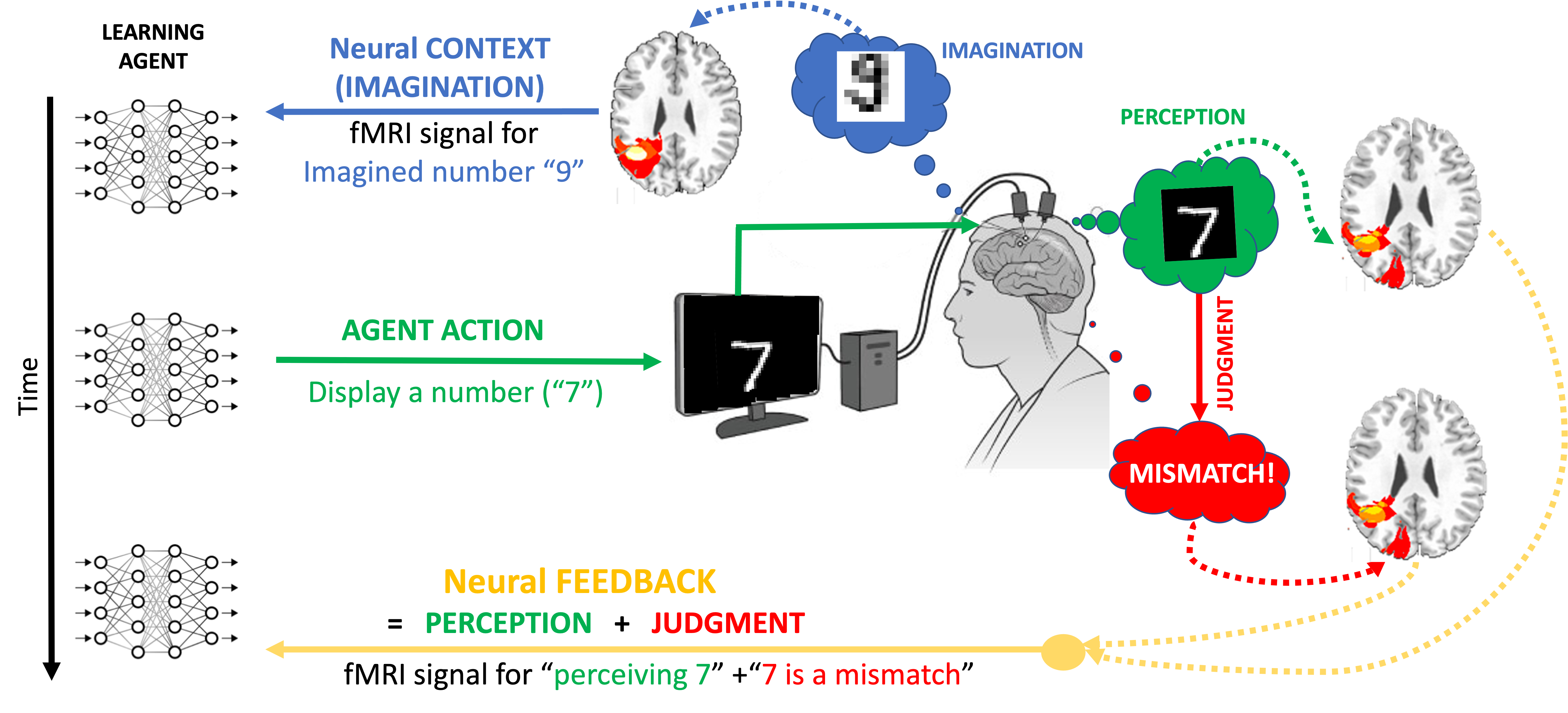}
    \vspace{-5pt}
    \caption{A simulated BCI interaction experiment for a number-guessing task. The human imagines a number, which the agent has to identify through their fMRI signals. The agent displays the number it identifies. The human perceives this number (action) and judges whether it is a match or not (reward)--the mixing of these two activities creates a complex feedback signal, which the learning agent has to decode through the interaction.}
    \label{fig:bci}
    \vspace{-10pt}
\end{figure}

\section{Background}
\label{sec:background}
\vspace{-2mm}
\paragraph{Interaction-Grounded Learning}
This paper studies the {\em Interaction-Grounded Learning} (IGL) setting~\citep{xie2021interaction}, where the learner optimizes for a latent reward by interacting with the environment and associating (``grounding'') observed feedback with the latent reward.  At each time step, the learner receives an i.i.d.~ context $x$ from context set $\Xcal$ and distribution $d_0$. The learner then selects an action $a \in \Acal$ from a finite action set $|\Acal| = K$. The environment generates a latent binary reward $r \in \{0,1\}$ (can be either deterministic or stochastic) and a feedback vector $y \in \Ycal$ conditional on $(x,a)$, but only the feedback vector $y$ is revealed to the learner. In this paper, we use $R(x,a) \coloneqq \E_{x,a}[r]$ to denote the expected (latent) reward after executing action $a$ on context $x$. The space of context $\Xcal$ and feedback vector $\Ycal$ can be arbitrarily large.

Throughout this paper, we use $\pi \in \Pi: \Xcal \to \Delta(\Acal)$ to denote a (stochastic) policy. The expected return of policy $\pi$ is defined by $V(\pi) = \E_{(x,a) \sim d_0 \times \pi} [r]$. The learning goal of IGL is to find the optimal policy in the policy class,
$
\pi^\star = \argmax_{\pi \in \Pi} V(\pi),
$
only from the observations of context-action-feedback tuples, $(x,a,y)$.
This paper mainly considers the batch setting, and we use $\mu$ to denote the behavior policy.
In this paper, we also introduce value function classes and decoder classes. We assume the learner has access to a value function class $\Fcal$ where $f \in \Fcal: \Xcal \times \Acal \to [0,1]$ and reward decoder class $\psi \in \Psi: \Ycal \times \Acal \to [0,1]$. We defer the assumptions we made on these classes to Section~\ref{sec:solution_cpt} for clarity.

We may hope to solve IGL without any additional assumptions.
However, it is information-theoretically impossible without additional assumptions, even if the latent reward is decodable from $(x,a,y)$, as demonstrated by the following example.
\begin{example}[Hardness of assumption-free IGL]
Suppose $y = (x,a)$ and suppose the reward is deterministic in $(x, a)$. In this case, the latent reward $r$ can be perfectly decoded from $y$. However, the learner receives no more information than $(x,a)$ from the $(x,a,y)$ tuple.  Thus if $\Pi$ contains at least 2 policies, for any environment where any IGL algorithm succeeds, we can construct another environment with the same observable statistics where that algorithm must fail.
\label{example:hard}
\end{example}

\paragraph{IGL with full conditional independence}
Example~\ref{example:hard} demonstrates the need for further assumptions to succeed at IGL. \citet{xie2021interaction} proposed an algorithm that leverages the following conditional independence assumption to facilitate grounding the feedback in the latent reward. \begin{assumption}[Full conditional independence]
\label{asmp:full_CI}
For arbitrary $(x,a,r,y)$ tuples where $r$ and $y$ are generated conditional on the context $x$ and action $a$, we assume the feedback vector $y$ is conditionally independent of context $x$ and action $a$ given the latent reward $r$, i.e. $x,a \indep y | r$.
\end{assumption}
\citet{xie2021interaction} introduce a reward decoder class $\psi \in \Psi: \Ycal \to [0,1]$ for estimating $\E[r|y]$, which leads to the decoded return $V(\pi,\psi) \coloneqq \E_{(x,a) \sim d_0 \times \pi} [\psi(y)]$. They proved that it is possible to learn the best $\pi$ and $\psi$ jointly by optimizing the following proxy learning objective:
\begin{align}
\label{eq:old_igl_objective}
\argmax_{(\pi,\psi) \in \Pi \times \Psi} \Jcal(\pi,\psi) \coloneqq V(\pi,\psi) - V(\pi_\bad, \psi),
\end{align}
where $\pi_\bad$ is a policy known to have low expected return. Over this paper, we use \oldIGL to denote the proposed algorithm of~\citet{xie2021interaction}.

\section{A Contrastive-learning Perspective for Grounding Latent Reward}
\label{sec:solution_cpt}

The existing work of interaction-grounded learning leverages the assumption of full conditional independence (Assumption~\ref{asmp:full_CI}), where the feedback vector only contains information from the latent reward.
The goal of this paper is to relax these constraining assumptions and broaden the scope of applicability for IGL.
This paper focuses on the scenario where the action information is possibly embedded in the feedback vector, which is formalized by the following assumption.
\begin{assumption}[Context Conditional Independence]
\label{asmp:x_CI}
For arbitrary $(x,a,r,y)$ tuple where $r$ and $y$ are generated conditional on the context $x$ and action $a$, we assume the feedback vector $y$ is conditionally independent of context $x$ given the latent reward $r$ and action $a$. That is,  $x \indep y | a, r$.
\end{assumption}

Assumption~\ref{asmp:x_CI} allows the feedback vector $y$ to be generated from both latent reward $r$ and action $a$, differing from Assumption~\ref{asmp:full_CI} which constrains the feedback vector $y$ to be generated based only on the latent reward $r$.  We discuss the implications at the end of this section.

\subsection{Grounding Latent Reward via Contrastive Learning}
\label{sec:latent_stat_dis}

In this section, we propose a contrastive learning objective for interaction-grounded learning, which further guides the design of our algorithm. We perform derivations with exact expectations for clarity and intuitions and provide finite-sample guarantees in Section~\ref{sec:algo_theory}.

\begin{assumption}[Separability]
\label{asmp:realizability}
For each $\abar \in \Acal$, there exists an $(f_\abar^\star,\psi_\abar^\star) \in \Fcal \times \Psi$, such that: 1)
$\E\left[ \psi_\abar^\star(y,\abar) | \abar, r = 1 \right] - \E\left[ \psi_\abar^\star(y,\abar) | \abar, r = 0 \right] = 1$; 2) $|\E_\mu\left[ f_\abar^\star(x,\abar) | \abar, r = 1 \right] - \E_\mu\left[ f_\abar^\star(x,\abar) | \abar, r = 0 \right]| \geq \Delta_\Fcal$.
We also assume $1 - \Psi \subseteq \Psi$, where $1 - \Psi \coloneqq \{1 - \psi(\cdot,\cdot):\psi \in \Psi\}$.
\end{assumption}

Assumption~\ref{asmp:realizability} consists of two components. For
$\Psi$, it is a realizability assumption that ensures that a perfect reward decoder is included in the function classes.   Although this
superficially appears unreasonably strong, note $y$ is generated based
upon $a$ and the \emph{realization} of $r$; therefore this is compatible
with stochastic rewards.
For $\Fcal$, it ensures the expected predicted reward conditioned on the latent reward, having value $r \in \{0,1\}$ and the action being $a$, is separable.
When $\mu = \pi_\unif$, $\Delta_\Fcal$ can be lower bounded by $\max_{f \in \Fcal} 4 |\Cov_{\pi_\abar}(f,R)|$ ($\pi_\abar$ denotes the constant policy with action $\abar$). Detailed proof of this argument can be found in Appendix~\ref{sec:proofs}. One sufficient condition is that the expected latent reward $R(\cdot,\abar)$ has enough variance and $R \in \Fcal$.
The condition of $1 - \Psi \subseteq \Psi$ can be constructed easily via standard realizable classes. That is, if $\psi^\star_\abar \in \Psi'$ for some classes $\Psi'$, simply setting $\Psi \leftarrow \Psi' \bigcup (1 - \Psi')$ satisfies Assumption~\ref{asmp:realizability}. Note that, this construction of $\Psi$ only amplifies the size of $\Psi'$ by a factor of 2.

\paragraph{Reward Prediction via Contrastive Learning}
We now formulate the following contrastive-learning objective for solving IGL.
Suppose $\mu(a|x)$ is the behavior policy. We also abuse $\mu(x,a,y)$ to denote the data distribution, and $\mu_a(x,y)$, $\mu_a(x)$, $\mu_a(y)$ to denote the marginal distribution under action $a \in \Acal$.
We construct an augmented data distribution for each $a \in \Acal$: $\mutilde_a(x,y) \coloneqq \mu_a(x) \cdot \mu_a(y)$ (i.e., sampling $x$ and $y$ independently from $\mu_a$).

Conceptually, for each action $a \in \Acal$, we consider tuples $(x,y) \sim \mu_a(x,y)$ and $(\xtilde,\ytilde) \sim \mutilde_a(x,y)$. From Assumption~\ref{asmp:realizability}, conditioned on $(x, a)$ 
the optimal feedback decoder $\psi^\star(y, a)$ has mean equal to the optimal reward predictor $f^\star(x, a)$. Therefore we might seek an $(f, \psi)$ pair which minimizes any consistent loss function, e.g., squared loss. However this is trivially achievable, e.g., by having both always predict 0.  Therefore we formulate a contrastive-learning objective, where we maximize the loss between the predictor and decoder on the augmented data distribution.
For each $a \in \Acal$, we solve the following objective
\begin{align}
\label{eq:def_obj_a}
\argmin_{(f_a,\psi_a) \in \Fcal \times \Psi} \Lcal_a(f_a,\psi_a) \coloneqq \E_{\mu_a}\left[ \left(f_a(x,a) - \psi_a(y,a)\right)^2 \right] - \E_{\mutilde_a}\left[ \left(f_a(\xtilde,a) - \psi_a(\ytilde,a)\right)^2 \right].
\end{align}
In the notation of equation~\eqref{eq:def_obj_a}, the $a$ subscript indicates the $(f_a, \psi_a)$ pair are optimal for action $a$.  Note they are always evaluated at $a$, which we retain as an input for compatibility with the original function classes.

Note that $\Lcal$ is also similar to many popular contrastive losses~\citep[e.g.,][]{ wu2018unsupervised, chen2020simple} especially for the spectral contrastive loss~\citep{haochen2021provable}.
\begin{align}
\Lcal_a(f_a,\psi_a) = &~ \E_{\mu_a}\left[ f_a(x,a)^2 - 2 f_a(x,a) \psi_a(y,a) + \psi_a(y,a)^2 \right]
\\
&~ - \E_{\mutilde_a}\left[ f_a(x,a)^2 - 2 f_a(x,a) \psi_a(y,a) + \psi_a(y,a)^2 \right]
\\
= &~ - 2 \left( \E_{\mu_a}\left[f_a(x,a) \psi_a(y,a)\right] - \E_{\mutilde_a}\left[ f_a(x,a) \psi_a(y,a) \right] \right)
\tag{\bf spectral contrastive loss}
\\
= &~ - 2 \left( \E_{\mu_a}\left[f_a(x,a) \psi_a(y,a)\right] - \E_{\mu_a}\left[ f_a(x,a) \right] \E_{\mu_a}\left[ \psi_a(y,a) \right] \right).
\tag{$\mutilde_a(x,y) = \mu_a(x) \cdot \mu_a(y)$}
\end{align}
Below we show that minimizing $\Lcal_a(f_a,\psi_a)$  decodes the latent reward under Assumptions~\ref{asmp:x_CI} and~\ref{asmp:realizability} up to a sign ambiguity. For simplicity, we introduce the notation of $f_{a,r}$ and $\psi_{a,r}$ for any $(f, \psi) \in \Fcal \times \Psi$,
\begin{align}\textstyle
\label{eq:def_far_psiar}
    f_{a,r} \coloneqq \sum_{x} \Pr(x|a,r) f(x,a), \quad \psi_{a,r} \coloneqq \sum_{y} \Pr(y|a,r) \psi(y,a).
\end{align}
$f_{a,r}$ and $\psi_{a,r}$ are the expected predicted reward of $f(x,a)$ and decoded reward $\psi(y,a)$ (under the behavior policy $\mu$ for $f_{a,r}$) conditioned on the latent reward having value $r$ and the action being $a$.

\begin{proposition}
\label{prop:decoding}
For any action $a \in \Acal$, if $\mu(a|x) > 0$ for all $x \in \Xcal$ and Assumption~\ref{asmp:x_CI} and~\ref{asmp:realizability} hold, and let $(\fhat_a, \psihat_a)$ be the solution of~\Eqref{eq:def_obj_a}. Then, $|\fhat_{a,1} - \fhat_{a,0}| = \max_{f \in \Fcal} |f_{a,1} - f_{a,0}|$ and $|\psihat_{a,1} - \psihat_{a,0}| = \max_{\psi \in \Psi} |\psi_{a,1} - \psi_{a,0}|$.
\end{proposition}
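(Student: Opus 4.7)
The plan is to reduce minimization of $\Lcal_a$ to maximizing a product of ``separation gaps'' in $f$ and in $\psi$, and then to exploit the closure $1-\Psi\subseteq\Psi$ so that the two individual maxima can be attained simultaneously. I would start from the spectral form of the loss already derived just above the proposition,
\[
\Lcal_a(f_a,\psi_a) = -2\bigl(\E_{\mu_a}[f_a(x,a)\psi_a(y,a)] - \E_{\mu_a}[f_a(x,a)]\,\E_{\mu_a}[\psi_a(y,a)]\bigr),
\]
so that minimizing $\Lcal_a$ over $(f_a,\psi_a)\in\Fcal\times\Psi$ is equivalent to maximizing $\Cov_{\mu_a}(f_a(x,a),\psi_a(y,a))$. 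Under Assumption~\ref{asmp:x_CI}, conditioning on $r\in\{0,1\}$ and using $x\indep y\mid a,r$, each expectation in the covariance splits into a sum over $r$ weighted by $\Pr(r\mid a)$, with $\E[f_a(x,a)\psi_a(y,a)\mid a,r]=f_{a,r}\psi_{a,r}$. Writing $p\coloneqq\Pr(r=1\mid a)$ and expanding, the cross terms collapse (using $p(1-p)=(1-p)p$) to yield the factorization
\[
\Cov_{\mu_a}(f_a(x,a),\psi_a(y,a)) = p(1-p)\,(f_{a,1}-f_{a,0})(\psi_{a,1}-\psi_{a,0}).
\]
The degenerate case $p(1-p)=0$ is implicitly excluded by Assumption~\ref{asmp:realizability}, since otherwise the conditional quantities $f^\star_{a,r},\psi^\star_{a,r}$ would be undefined for one value of $r$.

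Now let $A^\star\coloneqq\max_{f\in\Fcal}|f_{a,1}-f_{a,0}|$ and $B^\star\coloneqq\max_{\psi\in\Psi}|\psi_{a,1}-\psi_{a,0}|$. Since $p(1-p)>0$ does not influence the arg-max, the minimizer of $\Lcal_a$ coincides with the maximizer of the signed product $(f_{a,1}-f_{a,0})(\psi_{a,1}-\psi_{a,0})$. The product is bounded above by $A^\star B^\star$ by applying absolute values to each factor, and this bound is attained: pick $f^+,\psi^+$ realizing the individual maxima, and if the two signed gaps have opposite signs, replace $\psi^+$ by $1-\psi^+$, which lies in $\Psi$ by the closure condition $1-\Psi\subseteq\Psi$ and flips the sign of $\psi^+_{a,1}-\psi^+_{a,0}$ while preserving its magnitude. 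Hence the optimal value of the product is exactly $A^\star B^\star$.

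Finally, evaluating at the solution $(\hat f_a,\hat\psi_a)$ yields the squeeze
\[
A^\star B^\star \;=\; (\hat f_{a,1}-\hat f_{a,0})(\hat\psi_{a,1}-\hat\psi_{a,0}) \;\le\; |\hat f_{a,1}-\hat f_{a,0}|\cdot|\hat\psi_{a,1}-\hat\psi_{a,0}| \;\le\; A^\star B^\star,
\]
so every inequality is an equality and both claimed identities follow. The main obstacle is step two: the clean product factorization of the covariance relies fundamentally on the context conditional independence $x\indep y\mid a,r$, without which one inherits cross terms coupling $f$ and $\psi$ non-separably and the tight sandwich collapses. The closure $1-\Psi\subseteq\Psi$ plays a smaller but essential role in aligning signs so that the two individual maxima can be realized simultaneously.
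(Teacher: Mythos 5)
Your proposal is correct and follows essentially the same route as the paper's proof: condition on $r$ using $x \indep y \mid a, r$ to factor the covariance as $(1-\rho^\mu_a)\rho^\mu_a\,(f_{a,1}-f_{a,0})(\psi_{a,1}-\psi_{a,0})$, then maximize the product. You are in fact somewhat more careful than the paper at the final step---the appendix proof stops at the factorization and the in-text sketch merely asserts that separately maximizing the two gaps suffices, whereas you make explicit the sign-alignment via $1-\Psi\subseteq\Psi$ and the squeeze argument that forces both individual maxima to be attained at the joint optimizer.
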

\begin{proof}[Proof Sketch]
For any policy $\pi$, we use $d^\pi_a \coloneqq \sum_{x}d_0(x) \pi(a|x)$ to denote the visitation occupancy of action $a$ under policy $\pi$, and $\rho^\pi_a \coloneqq \nicefrac{1}{d^\pi_a}\sum_{x}d_0(x) \pi(a|x) R(x,a)$ to denote the average reward received under executing action $a$.
Then, by the context conditional independent assumption $x \indep y | r,a$ (Assumption~\ref{asmp:x_CI}), we know
\begin{align}\textstyle
\E_{\mu_a}\left[f(x) \psi(y)\right] = &~ \textstyle (1 - \rho^\mu_a ) f_{a,0} \psi_{a,0} + \rho^\mu_a f_{a,1} \psi_{a,1}
\\
\textstyle \E_{\mu_a}\left[ f(x) \right] \E_{\mu_a}\left[ \psi(y) \right] = &~ \textstyle (1 - \rho^\mu_a)^2 f_{a,0} \psi_{a,0} + \left( \rho^\mu_a \right)^2 f_{a,1} \psi_{a,1} + (1 - \rho^\mu_a) \rho^\mu_a (f_{a,0} \psi_{a,1} + f_{a,1} \psi_{a,0})
\\
\textstyle \Longrightarrow \Lcal_a(f_a,\psi_a) \propto &~ \textstyle - (f_{a,1} - f_{a,0}) (\psi_{a,1} - \psi_{a,0}).
\end{align}
Therefore, separately maximizing $|f_{a,1} - f_{a,0}|$ and $|\psi_{a,1} - \psi_{a,0}|$ maximizes $\Lcal_a(f_a,\psi_a)$.
\end{proof}

\subsection{Symmetry Breaking}
\label{sec:sym_brk}

In the last section, we demonstrated that the latent reward could be decoded in a contrastive-learning manner up to a sign ambiguity.  The following example demonstrates the ambiguity.

\begin{example}[Why do we need extra information to identify the latent reward?]
\label{exp:symmetry}
In the optimal solution of 
objective~\Eqref{eq:def_obj_a}, both $\psihat_a$ and $\psihat'_a \coloneqq 1 - \psihat_a$ yield the same value.
It is information-theoretically difficult to distinguish which one of them is the correct solution without extra information.
That is because, for any environment {\sf ENV1}, there always exists a ``symmetric'' environment {\sf ENV2}, where: 1) $R(x,a)$ of {\sf ENV1} is identical to $(1 - R(x,a))$ of {\sf ENV2} for all $(x,a) \in \Xcal \times \Acal$; 2) the the conditional distribution of $y|r,a$ in {\sf ENV1} is identical to the conditional distribution of $y|1-r,a$ in the {\sf ENV2} for all $a \in \Acal$.
In this example, {\sf ENV1} and {\sf ENV2} will always generate the identical distribution of feedback vector $y$ after any $(x,a) \in \Xcal \times \Acal$. However, {\sf ENV1} and {\sf ENV2} have the exactly opposite latent reward information.
\end{example}

As we demonstrate in Example~\ref{exp:symmetry}, the learner decoder from~\Eqref{eq:def_obj_a} could be corresponding to a symmetric pair of semantic meanings, and identifying them without extra information is information-theoretically impossible.
The {\em symmetry breaking} procedure is one of the key challenges of interaction-grounded learning. To achieve symmetry breaking, we make the following assumption to ensure the identifiability of the latent reward.
\begin{assumption}[Baseline Policies]
\label{asmp:pi_bad}
For each $a \in \Acal$, there exists a baseline policy $\pi_\bad^a$, such that,
\begin{enumerate}[(a)]
\item $\pi_\bad^a$ satisfies $\sum_{x}d_0(x) \pi(a|x) \geq c_m > 0$.
\item $|\frac{1}{2} - \rho^{\pi_\bad^a}_a| \geq \eta$, where $\rho^{\pi_\bad^a}_a = \frac{\sum_{x}d_0(x) \pi_\bad^a(a|x) R(x,a)}{\sum_{x}d_0(x) \pi_\bad^a(a|x)}$.
\end{enumerate}
\end{assumption}

To instantiate Assumption~\ref{asmp:pi_bad} in practice, we provide the following simple example of $\pi_\bad$ that satisfies Assumption~\ref{asmp:pi_bad}.
Suppose $\pi_\bad = \pi_\unif$ (uniformly random policy), and we have ``all constant policies are bad'', i.e., $V(\pi_\abar = \1(a = \abar)) < \nicefrac{1}{2} - \eta$ for all $\abar \in \Acal$. Then it is easy to verify that $c_m = \nicefrac{1}{K}$ and $\rho_\abar^{\pi_\bad} \leq \nicefrac{1}{2} - \eta$ for all $\abar \in \Acal$.

Note that $\pi_\bad^a$ can be different over actions. Intuitively, Assumption~\ref{asmp:pi_bad}(a) is saying that the total probability of $\pi_\bad^a$ selecting action $a$ (over all context $x \in \Xcal$) is at least $c_m$. This condition ensures that $\pi_\bad^a$ has enough visitation to action $a$ and makes symmetry breaking possible. Assumption~\ref{asmp:pi_bad}(b) states that if we only consider the reward obtained from taking action $a$, $\pi_\bad^a$ is known to be either ``sufficiently bad'' or ``sufficiently good''. Note the directionality of the extremeness of $\pi_\bad^a$ must be known, e.g., a policy which has a unknown reward of either 0 or 1 is not usable.  This condition follows a similar intuition as the identifiability assumption of~\citet[Assumption 2]{xie2021interaction} and breaks the symmetry. 
For example, consider the {\sf ENV1} and {\sf ENV2} introduced in Example~\ref{exp:symmetry}, $\rho^{\pi}_{a,{\sf ENV1}} = 1 - \rho^{\pi}_{a,{\sf ENV2}}$ for any policy $\pi$. To separating {\sf ENV1} and {\sf ENV2} using some policy $\pi$, $\rho^{\pi}_{a,{\sf ENV1}}$ and $\rho^{\pi}_{a,{\sf ENV2}}$ require to have a non-zero gap, which leads to Assumption~\ref{asmp:pi_bad}(b).

The effectiveness of symmetry breaking under Assumption~\ref{asmp:pi_bad} can be summarized as below: we conduct the following estimation of $\rho^{\pi_\bad}_a$, using the learned $\psihat_a$,
$
\rhohat^{\pi_\bad^a}_a = \frac{\sum_{x}d_0(x) \pi_\bad^a(a|x) \psihat_a(x,a)}{\sum_{x}d_0(x) \pi_\bad^a(a|x)}.
$
If $\psihat_a$ can efficiently decode the latent reward, then $\rhohat^{\pi_\bad^a}_a$  converges to either $\rho^{\pi_\bad}_a$ or $1 - \rho^{\pi_\bad}_a$. Therefore, applying Assumption~\ref{asmp:pi_bad}(b) breaks the symmetry.

\subsection{Comparison to Full CI}

When we have the context conditional independence, it is easy to verify the failure of optimizing the original IGL objective~\Eqref{eq:old_igl_objective} by the following example. 

\begin{example}[Failure of the original IGL objective under Assumption~\ref{asmp:x_CI}]
Let $\Xcal = \Acal = \{1,2,\dotsc,10\}$ and feedback vector is generated by $y = (a + R(x,a)) \mod 10$ (we use $\%$ to denote ${\rm mod}$ in the following part). We also assume $d_0(x) = \pi_\bad(a|x) = \nicefrac{1}{10}$ for any $(x,a) \in \Xcal \times \Acal$ and $R(x,a) = \pi^\star(a|x) \coloneqq \1(x = a)$. Then, we have, for any $\psi: \Ycal \to [0,1]$ (approach proposed by~\citet{xie2021interaction} assumes the reward decoder only takes feedback vector $y$ as the input),
\begin{align*}
\textstyle
\Lcal(\pi^\star,\psi) = &~ \textstyle \frac{1}{10} \sum_{x=1}^{10} \sum_{a = 1}^{10} \1(x=a) \psi((a + 1) \% 10) - \frac{1}{100} \sum_{x=1}^{10} \sum_{a = 1}^{10} \psi((a + \1(x=a)) \% 10)
\\
= &~ \textstyle \frac{1}{10} \sum_{a = 1}^{10} \psi(a) - \frac{1}{10} \sum_{a = 1}^{10} \psi(a) = 0.
\end{align*}
On the other hand, consider the constant policy $\pi_{1}(a|x) \coloneqq \1(a = 1)$ for all $x \in \Xcal$ and decoder $\psi_2(y) \coloneqq \1(y = 2)$ for all $y \in \Ycal$, then,
\begin{align*}
\textstyle
\Lcal(\pi_{1},\psi_2) = &~ \textstyle \frac{1}{10} \sum_{x=1}^{10} \sum_{a = 1}^{10} \1(a=1) \psi_2((a + 1) \% 10) - \frac{1}{100} \sum_{x=1}^{10} \sum_{a = 1}^{10} \psi_2((a + \1(x=a)) \% 10)
\\
= &~ \textstyle \psi_2(2) - \frac{1}{10} \sum_{a = 1}^{10} \psi_2(a) = 0.9 > \Lcal(\pi^\star,\psi), ~~ \forall \psi \in \Psi.
\end{align*}
This implies that maximizing the original IGL objective~\Eqref{eq:old_igl_objective} could not always converge to $\pi^\star$ when we only have the context conditional independence.
\end{example}

This example indicates optimizing a combined contrastive objective with a single symmetry-breaking policy is insufficient to succeed in our $x \indep y | r,a$ case.  Our current approach corresponds to optimizing a contrastive objective and breaking symmetry for each action separately rather than simultaneously.

\subsection{Viewing Latent Reward as a Latent State}
\begin{figure*}[t!]
\centering
\begin{subfigure}{0.35\textwidth}
\centering
\includegraphics[width=0.57142857142\textwidth]{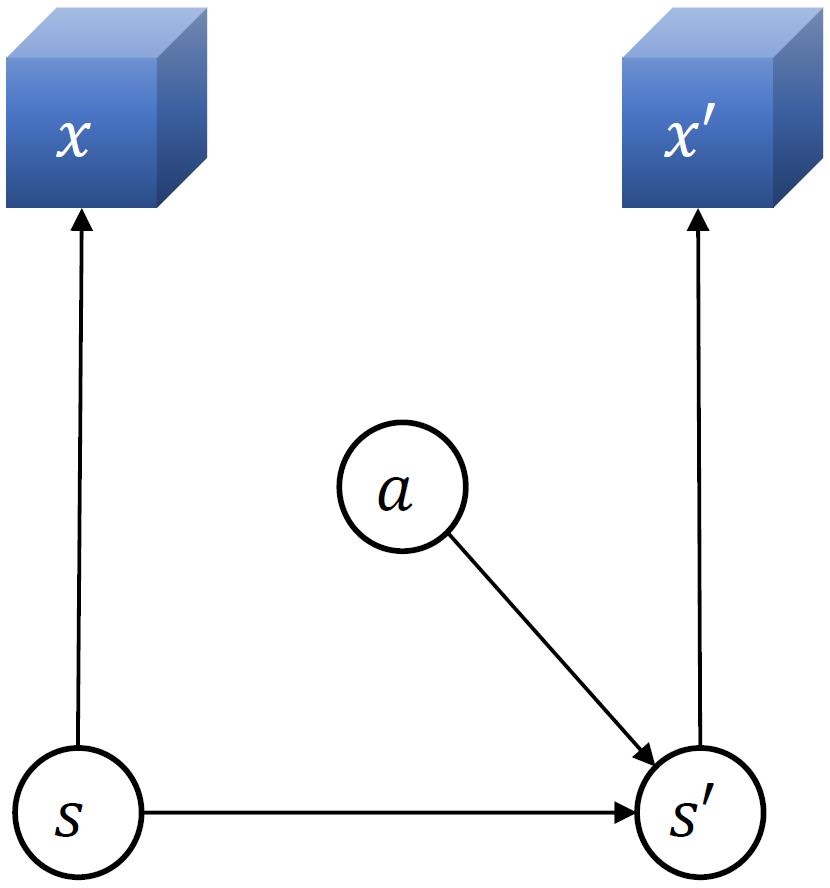}
\caption{Latent State in Rich-Observation RL~\citep{misra2020kinematic}}
\label{fig:homer_latent_states}
\end{subfigure}
~
\begin{subfigure}{0.3\textwidth}
\centering
\includegraphics[width=.95\textwidth]{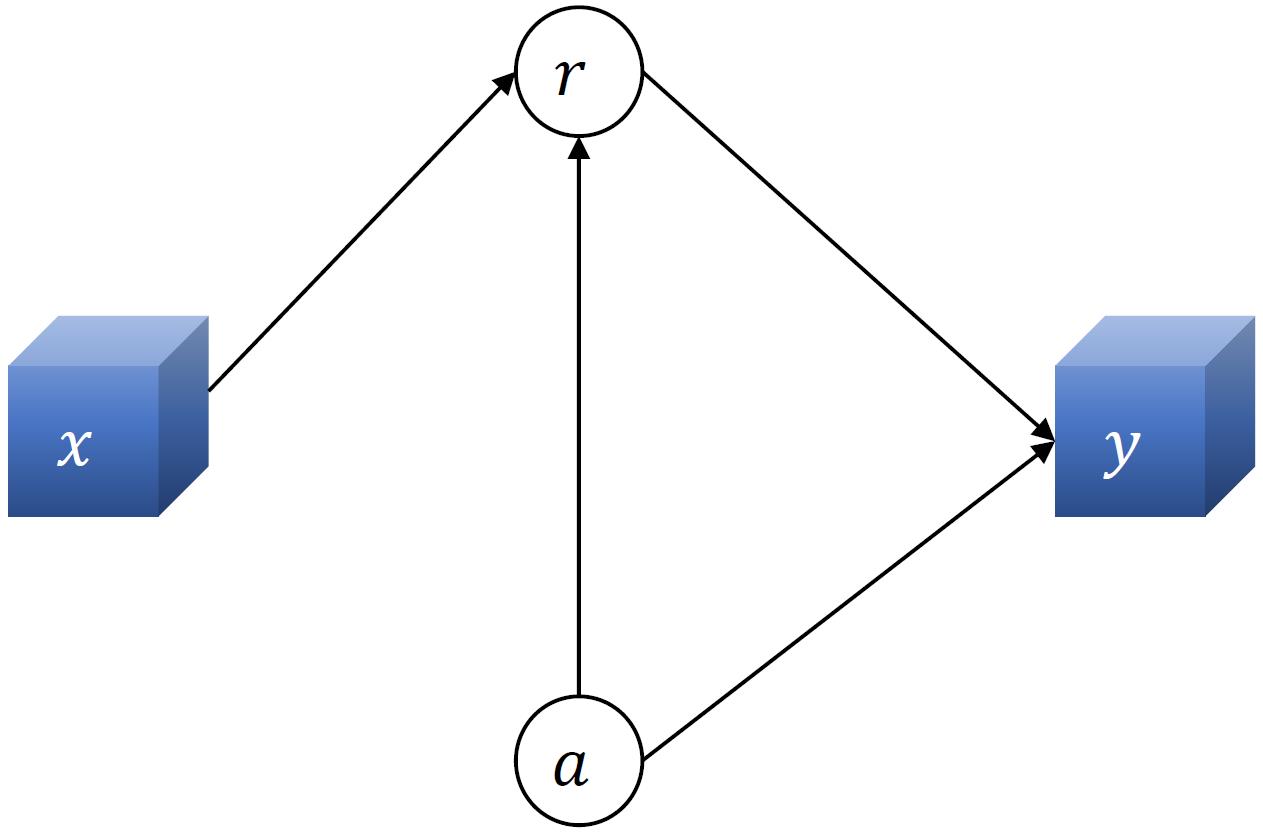}
\caption{Latent Reward in IGL under Assumption~\ref{asmp:x_CI} (this paper)}
\label{fig:igl_latent_states}
\end{subfigure}
~
\begin{subfigure}{0.3\textwidth}
\centering
\includegraphics[width=.95\textwidth]{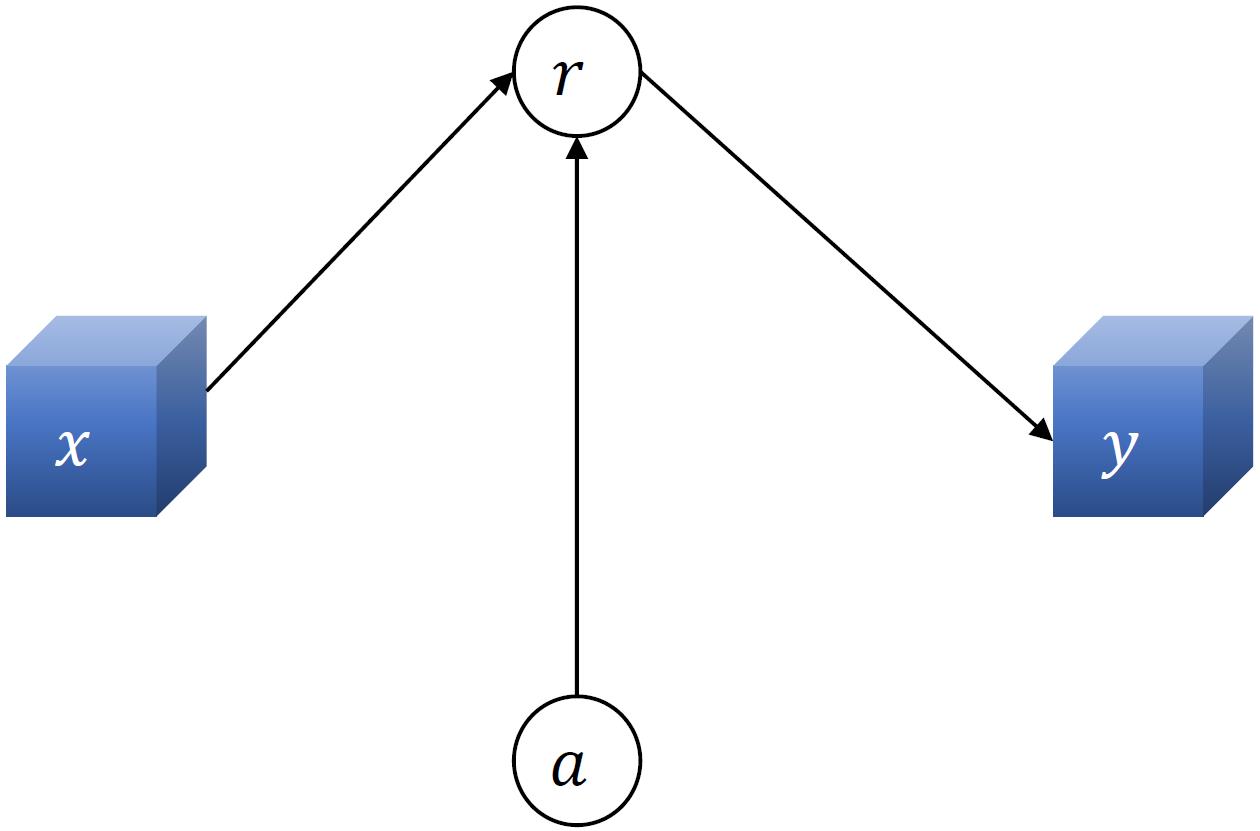}
\caption{Latent Reward in IGL under Assumption~\ref{asmp:full_CI}~\citep{xie2021interaction}}
\label{fig:igl_latent_states_old}
\end{subfigure}
\caption{Causal graphs of interaction-grounded learning under different assumptions as  well as rich-observation reinforcement Learning.}
\label{fig:Causal_Graphs}
\vspace{-3mm}
\end{figure*}
Our approach is motivated by 
latent state discovery in Rich-Observation RL~\citep{misra2020kinematic}.  Figure~\ref{fig:Causal_Graphs} compares the causal graphs of Rich-Observation RL, IGL with context conditional independence, and IGL with full conditional independence.  In Rich-Observation RL, a contrastive learning objective is used to discover latent states; whereas in IGL a contrastive learning objective is used to discover latent rewards.  In this manner we view latent rewards analogously to latent states.

Identifying latent states up to a permutation is completely acceptable in Rich-Observations RL, as the resulting imputed MDP orders policies identically to the true MDP.  However in IGL the latent states have scalar values associated with them (rewards), and identifying up to a permutation is not sufficient to order policies correctly.
Thus, we require the additional step of symmetry breaking.

\section{Main Algorithm}
\label{sec:algo_theory}
This section instantiates the algorithm for IGL with action-inclusive feedback, following the concept we introduced in Section~\ref{sec:solution_cpt}. For simplicity, we select the uniformly random policy $\pi_\unif$ as behavior policy $\mu$ in this section. That choice of $\mu$ can be further relaxed %
using an appropriate importance weight. We first introduce the empirical estimation of $\Lcal_a$ as follows ($\Lcal_a$ is defined in~\Eqref{eq:def_obj_a}. $\Jcal_{\abar,\Dcal}$ estimates the spectral contrastive loss, which corresponds to $-\Lcal_a$). For any $\abar \in \Acal$, we define the following empirical estimation of the spectral contrastive loss:
\begin{align}
\label{eq:emp_obj}
\Jcal_{\abar,\Dcal}(f,\psi) = \E_{\Dcal}\left[f(x,a) \psi(y,a) \1(a = \abar)\right] - \E_{\Dcal}\left[ f(x,a) \1(a = \abar) \right] \E_{\Dcal}\left[ \psi(y,a) \1(a = \abar) \right].
\end{align}

Using this definition, Algorithm~\ref{alg:IGL_x_CI} instantiates a version of the IGL algorithm with action-inclusive feedback. Without loss of generality, we also assume $\rho^{\pi_\bad^a}_a \leq \nicefrac{1}{2}$ for Assumption~\ref{asmp:pi_bad}(b) for all $a \in \Acal$ in this section. The case of $\rho^{\pi_\bad^a}_a > \nicefrac{1}{2}$ for some action $a$ can be addressed by modifying the symmetry-breaking step properly in Algorithm~\ref{alg:IGL_x_CI}.

\begin{algorithm}[th]
\caption{Action-inclusive IGL (AI-IGL)}
\label{alg:IGL_x_CI}
{\bfseries Input:} Batch data $\Dcal$ generated by $\mu = \pi_\unif$. baseline policy $\pi_\bad^{a \in \Acal}$.
\begin{algorithmic}[1]
\State Initialize policy $\pi_1$ as the uniform policy.
\For{$\abar \in \Acal$}
\State \label{line:lat_rwd_dis} Obtain $(f_{\abar},\psi_{\abar})$ by \algocmt{Latent State (Reward) Discovery}
\begin{align}
\label{eq:decode_f_psi}
(f_{\abar},\psi_{\abar}) \leftarrow \argmax_{(f,\psi) \in \Fcal \times \Psi} \Jcal_{\abar,\Dcal}(f,\psi),
\end{align}
where $\Jcal_{\abar,\Dcal}(f,\psi)$ is defined in \Eqref{eq:emp_obj}
\State \label{line:sym_brk1} Compute $\varrhohat^{\pi_\bad^{\abar}}_{\abar}$ by
$
\varrhohat^{\pi_\bad^{\abar}}_{\abar} = \frac{\sum_{(x,a,y) \in \Dcal}\pi_\bad^{\abar}(a|x) \psi_\abar(x,\abar) \1(a = \abar)}{\sum_{(x,a,y) \in \Dcal} \pi_\bad^{\abar}(a|x) \1(a = \abar)}.
$\algocmt{Symmetry Breaking}
\If{$\varrhohat^{\pi_\bad}_{\abar} > \frac{1}{2}$}  \label{step:sym_brk_1}
\quad $\psi'_{\abar} \leftarrow (1 - \psi_{\abar})$.
\Else \label{step:sym_brk_2}
\quad $\psi'_{\abar} \leftarrow \psi_{\abar}$.
\EndIf \label{line:sym_brk2}
\EndFor
\State Generate decoded contextual bandits dataset $\Dcal_\CB \leftarrow \{(x,a,\psi'_a(y,a),\mu(a|x)): (x,a,y) \in \Dcal \}$.
\State Output policy $\pihat(x) \leftarrow \CB(\Dcal_\CB)$, where $\CB$ denotes an offline contextual bandit oracle.
\end{algorithmic}
\end{algorithm}

At a high level, Algorithm~\ref{alg:IGL_x_CI} has two separate components, latent state (reward) discovery (line~\ref{line:lat_rwd_dis}) and symmetry breaking (line~\ref{line:sym_brk1}-\ref{line:sym_brk2}), for each action in $\Acal$.

\paragraph{Theoretical guarantees}

In Algorithm~\ref{alg:IGL_x_CI}, the output policy $\pihat$ is obtained by calling an offline contextual bandits oracle ($\CB$). We now formally define this oracle and its expected property.
\begin{definition}[Offline contextual bandits oracle]
\label{def:CB_oracle}
An algorithm $\CB$ is called an {\em offline contextual bandit oracle} if for any dataset $\Dcal= \{(x_i,a_i,r_i,\mu(a_i|x_i))\}_{i = 1}^{|\Dcal|}$ ($x_i \sim d_0, a_i \sim \mu$, and $r_i$ is the reward determined by $(x_i,a_i)$) and any policy class $\Pi$, the policy $\pihat$ produced by $\CB(\Dcal)$ satisfies
$
\varepsilon_{\CB} \coloneqq \max_{\pi \in \Pi} \E_{d_0 \times \pi}[r] - \E_{d_0 \times \pihat}[r]  \leq o(1).
$
\end{definition}

The notion in Definition~\ref{def:CB_oracle} corresponds to the standard policy learning approaches in the contextual bandits literature~\citep[e.g.,][]{langford2007epoch,dudik2011efficient,agarwal2014taming}, and typically leads to $\varepsilon_{\CB} = \sqrt{K \nicefrac{\log{\nicefrac{|\Pi|}{\delta}}}{|\Dcal|}}$.
We now provide the theoretical analysis of Algorithm~\ref{alg:IGL_x_CI}. In this paper, we use $d_{\Fcal, \Psi}$ to denote the joint statistical complexity of the class of $\Fcal$ and $\Psi$. For example, if the function classes are finite, we have $d_{\Fcal, \Psi} = \Ocal (\log \nicefrac{|\Fcal| |\Psi|}{\delta})$, and $\delta$ is the failure probability. The infinite function classes can be addressed by some advanced methods such as covering number or Rademacher complexity~\citep[see, e.g., ][]{mohri2018foundations}.
The following theorem provides the performance guarantee of the output policy of Algorithm~\ref{alg:IGL_x_CI}.
\begin{theorem}
\label{thm:main_thm}
Suppose Assumptions~\ref{asmp:x_CI},~\ref{asmp:realizability} and ~\ref{asmp:pi_bad} hold, $\Delta_\Fcal$ is defined in Assumption~\ref{asmp:realizability}, and $\pihat$ be the output policy of Algorithm~\ref{alg:IGL_x_CI}. If we have
$
|\Dcal| \geq \Ocal \left( \frac{K^3 d_{\Fcal, \Psi}}{\left(\min\{\eta \Delta_\Fcal, K c_m \}\right)^2} \right),
$
then, with high probability,\\
$
V(\pi^\star) - V(\pi)  \leq \Ocal \left( \frac{1}{\Delta_\Fcal} \sqrt{\frac{K^3 d_{\Fcal, \Psi}}{|\Dcal|}} \right) + \varepsilon_{\CB}.
$
\end{theorem}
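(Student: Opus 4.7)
The plan is to work per action $\abar\in\Acal$ and combine the pieces via a union bound. The four ingredients are: (i) uniform concentration of the empirical contrastive loss, (ii) extraction of a near-perfect decoder from the contrastive optimum, (iii) correctness of the symmetry-breaking test, and (iv) a standard contextual-bandit oracle reduction on the decoded dataset.

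I will first establish, by Hoeffding and a covering/union-bound over $\Fcal\times\Psi$, the uniform deviation
\[
\sup_{(f,\psi)\in\Fcal\times\Psi}\bigl|\Lcal_{\abar,\Dcal}(f,\psi)-\Lcal_\abar(f,\psi)\bigr| \leq \varepsilon_{\rm stat} = \Ocal\!\left(\sqrt{K d_{\Fcal,\Psi}/|\Dcal|}\right),
\]
where the $K$ arises because only a $1/K$ fraction of $\Dcal$ has $a_i=\abar$ under $\mu=\pi_\unif$. Next, starting from the identity $\Lcal_\abar(f,\psi)\propto -(f_{\abar,1}-f_{\abar,0})(\psi_{\abar,1}-\psi_{\abar,0})$ derived in the sketch of Proposition~\ref{prop:decoding}, Assumption~\ref{asmp:realizability} implies that the population maximum of this product equals $\Delta_\Fcal\cdot 1 = \Delta_\Fcal$. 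Combining this with concentration gives, up to sign, $(f_{\abar,1}-f_{\abar,0})(\psi_{\abar,1}-\psi_{\abar,0}) \geq \Delta_\Fcal - \Ocal(\varepsilon_{\rm stat})$; using $|f_{\abar,1}-f_{\abar,0}|\leq\Delta_\Fcal$ (the \emph{definition} of $\Delta_\Fcal$) and dividing yields $|\psi_{\abar,1}-\psi_{\abar,0}|\geq 1-\Ocal(\varepsilon_{\rm stat}/\Delta_\Fcal)$, and $\psi\in[0,1]$ then forces the conditional means within $\Ocal(\varepsilon_{\rm stat}/\Delta_\Fcal)$ of $(0,1)$ or $(1,0)$. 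This is the source of the $1/\Delta_\Fcal$ amplification in the final rate.

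For symmetry breaking, the statistic $\varrhohat^{\pi_\bad^\abar}_\abar$ is a ratio of importance-weighted averages; Assumption~\ref{asmp:pi_bad}(a) lower-bounds the denominator by $\Omega(c_m/K)$, so a ratio concentration argument shows it lies within $\Ocal(\varepsilon_{\rm stat}/\Delta_\Fcal)+\Ocal(\sqrt{K/(c_m|\Dcal|)})$ of either $\rho^{\pi_\bad^\abar}_\abar$ or $1-\rho^{\pi_\bad^\abar}_\abar$. The hypothesis $|\Dcal|\geq\Omega(K^3 d_{\Fcal,\Psi}/(\min\{\eta\Delta_\Fcal,Kc_m\})^2)$ is precisely what pushes this error below $\eta$, so Assumption~\ref{asmp:pi_bad}(b) guarantees that the threshold test against $\tfrac12$ correctly identifies the sign and that $\psi'_{\abar,1}$ and $\psi'_{\abar,0}$ are within $\Ocal(\varepsilon_{\rm stat}/\Delta_\Fcal)$ of $1$ and $0$, respectively. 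Assumption~\ref{asmp:x_CI} then implies $|\E[\psi'_a(y,a)\mid x,a]-R(x,a)|\leq \Ocal(\varepsilon_{\rm stat}/\Delta_\Fcal)$ pointwise, so the pseudo-return $\E_{d_0\times\pi}[\psi'_a(y,a)]$ is within the same order of $V(\pi)$ uniformly in $\pi$. Invoking the CB oracle on $\Dcal_\CB$ (Definition~\ref{def:CB_oracle}) and union-bounding over the $K$ actions produces $V(\pi^\star)-V(\pihat)\leq \Ocal(\tfrac{1}{\Delta_\Fcal}\sqrt{K^3 d_{\Fcal,\Psi}/|\Dcal|})+\varepsilon_{\CB}$, where the additional factor of $K$ that promotes $K$ to $K^3$ absorbs the $1/K$ scale at which $\Lcal_{\abar,\Dcal}$ sits relative to $\Lcal_\abar$.

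The main obstacle is turning a \emph{joint} product optimum into a sharp one-sided guarantee on the $\psi$-gap: the nontrivial upper bound $|f_{\abar,1}-f_{\abar,0}|\leq\Delta_\Fcal$ must be used, not the trivial $|f|\leq 1$, because otherwise one would only deduce $|\psi_{\abar,1}-\psi_{\abar,0}|\geq\Delta_\Fcal-\Ocal(\varepsilon_{\rm stat})$, which is too weak for both the symmetry-breaking test (whose $\eta$-margin would be overwhelmed) and the decoded-vs-true reward gap, and the $1/\Delta_\Fcal$ scaling of the theorem would be lost. A secondary source of care is consistently tracking the $K$ dependence across concentration, the ratio estimator, and the union bound.
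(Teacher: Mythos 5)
Your proposal is correct and follows essentially the same route as the paper: the same decomposition into per-action latent-reward discovery (Lemma~\ref{lem:ground_reward}), symmetry breaking (Lemma~\ref{lem:estimation_per_action}), and a CB-oracle reduction, with the same use of the identity $\Lcal_a \propto -(f_{a,1}-f_{a,0})(\psi_{a,1}-\psi_{a,0})$, the bound $|f_{\abar,1}-f_{\abar,0}|\le\Delta_\Fcal$ to isolate the $\psi$-gap, and the same accounting of the $K$ factors from the $\nicefrac{(K-1)}{K^2}$ proportionality constant and the per-action sample split.
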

Similar to the performance of~\citet{xie2021interaction}, the learned is guaranteed to converge in the right direction only after we have sufficient data for the symmetric breaking. The dependence on $K$ in Theorem~\ref{thm:main_thm} can be improved as different action has a separate learning procedure. For example, if we consider $\Fcal = \Fcal_1 \times \Fcal_2 \times \cdots \times \Fcal_K$ and $\Psi = \Psi_1 \times \Psi_2 \times \cdots \times \Psi_K$, where $\Fcal_a$ and $\Psi_a$ are independent components that is only corresponding to action $a$ (this is a common setup for linear approximated reinforcement learning approaches with discrete action space). If $\Fcal_a$ and $\Psi_a$ are identical copies of $K$ separate classes, we know $\log|\Fcal| = K \log|\Fcal_a|$ and $\log|\Psi| = K \log|\Psi_a|$, which leads a $\sqrt{K}$ improvement.

We now provide the proof sketch of Theorem~\ref{thm:main_thm} and we defer the detailed proof to Appendix~\ref{sec:proofs}.

\begin{proof}[Proof Sketch]
The proof of Theorem~\ref{thm:main_thm} consists of two different components---discovering latent reward and breaking the symmetry, which are formalized by the following lemma.
\begin{lemma}[Discovering latent reward]
\label{lem:ground_reward}
Suppose Assumptions~\ref{asmp:x_CI} and~\ref{asmp:realizability} hold, and let $(f_\abar, \psi_\abar)$ be obtained by \Eqref{eq:decode_f_psi}. Then, with high probability, we have
$|\psi_{\abar,1} - \psi_{\abar,0}| \geq \left( 1 - \Ocal \left( \frac{1}{\Delta_\Fcal} \sqrt{\frac{K^3 d_{\Fcal, \Psi}}{|\Dcal|}} \right) \right)$.
\end{lemma}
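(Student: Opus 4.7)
The plan is to combine a uniform concentration bound on the empirical contrastive loss $\Lcal_{\abar,\Dcal}$ with the population-level factorization derived in the proof of Proposition~\ref{prop:decoding}, then divide out the $f$-factor to isolate $|\psi_{\abar,1}-\psi_{\abar,0}|$. First, by Assumption~\ref{asmp:x_CI} and the calculation in Proposition~\ref{prop:decoding}, the population value $\E[\Lcal_{\abar,\Dcal}(f,\psi)]$ is proportional to $(f_{\abar,1}-f_{\abar,0})(\psi_{\abar,1}-\psi_{\abar,0})$ up to an $(f,\psi)$-separable remainder, with leading coefficient $\Theta(q\,\rho^\mu_\abar(1-\rho^\mu_\abar))$ where $q = d^\mu_\abar = 1/K$ under $\mu = \pi_\unif$. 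The two pieces of $\Lcal_{\abar,\Dcal}$ --- the empirical average of $f(x,a)\psi(y,a)\1(a=\abar)$ and the product of empirical averages of $f(x,a)\1(a=\abar)$ and $\psi(y,a)\1(a=\abar)$ --- are each controlled by a standard Rademacher / covering argument over $\Fcal\times\Psi$, combined via a triangle inequality for the product of two empirical means. Because the mask $\1(a=\abar)$ has both mean and variance $\Theta(q)$, this yields a uniform deviation bound
\begin{align*}
\sup_{(f,\psi)\in\Fcal\times\Psi}\bigl|\Lcal_{\abar,\Dcal}(f,\psi) - \E[\Lcal_{\abar,\Dcal}(f,\psi)]\bigr| \;\leq\; \varepsilon_{\rm stat} \;=\; \widetilde{\Ocal}\!\left(\sqrt{\frac{d_{\Fcal,\Psi}}{K|\Dcal|}}\right)
\end{align*}
with high probability.

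Next I would invoke the argmax property of $(f_\abar,\psi_\abar)$: its population value is within $2\varepsilon_{\rm stat}$ of the population maximum, which by Assumption~\ref{asmp:realizability} is attained at some $(f^\star_\abar,\psi^\star_\abar)$ with $|f^\star_{\abar,1}-f^\star_{\abar,0}|\geq \Delta_\Fcal$ and $|\psi^\star_{\abar,1}-\psi^\star_{\abar,0}|=1$. Writing $M_f := \max_{f\in\Fcal}|f_{\abar,1}-f_{\abar,0}|\geq\Delta_\Fcal$ and using the factorization,
\begin{align*}
q\,\rho^\mu_\abar(1-\rho^\mu_\abar)\,|f_{\abar,1}-f_{\abar,0}|\,|\psi_{\abar,1}-\psi_{\abar,0}| \;\geq\; q\,\rho^\mu_\abar(1-\rho^\mu_\abar)\,M_f - 2\varepsilon_{\rm stat}.
\end{align*}
Since $|f_{\abar,1}-f_{\abar,0}|\leq M_f$ by definition of the maximum, dividing through gives
\begin{align*}
|\psi_{\abar,1}-\psi_{\abar,0}| \;\geq\; 1 - \frac{2\varepsilon_{\rm stat}}{q\,\rho^\mu_\abar(1-\rho^\mu_\abar)\,M_f} \;\geq\; 1 - \Ocal\!\left(\frac{1}{\Delta_\Fcal}\sqrt{\frac{K^3\,d_{\Fcal,\Psi}}{|\Dcal|}}\right),
\end{align*}
where the $K^3$ inside the root arises from absorbing $q^{-1}=K$ together with a worst-case $\Theta(K)$ factor from lower-bounding $\rho^\mu_\abar(1-\rho^\mu_\abar)$, after the $\sqrt{q}$ tightening in $\varepsilon_{\rm stat}$.

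The main obstacle I expect is the uniform concentration of the product-of-empirical-means term in $\Lcal_{\abar,\Dcal}$: it is not a linear functional of the data, so symmetrization has to be applied to each factor separately before being recombined, and all of the $1/K$ scalings from the $\1(a=\abar)$ mask --- which simultaneously thin the effective sample size and shrink the population signal --- must be tracked carefully to recover the stated $K^3$ rate. A secondary bookkeeping nuisance is the $(f,\psi)$-separable remainder in the expected empirical objective: because it depends only on the marginals of $f$ and $\psi$, it can be either folded into the leading product term or shown to contribute only through quantities already covered by the uniform bound, but this requires a small amount of care when comparing $\Lcal_{\abar,\Dcal}(f_\abar,\psi_\abar)$ to $\Lcal_{\abar,\Dcal}(f^\star_\abar,\psi^\star_\abar)$.
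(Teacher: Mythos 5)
Your proposal is correct and follows essentially the same route as the paper: uniform concentration of the empirical contrastive loss over $\Fcal\times\Psi$, the product factorization $(f_{\abar,1}-f_{\abar,0})(\psi_{\abar,1}-\psi_{\abar,0})$ from Proposition~\ref{prop:decoding}, the ERM comparison against the population maximizer (whose value is $M_f\cdot 1$ by Assumption~\ref{asmp:realizability}), and division by $|f_{\abar,1}-f_{\abar,0}|\le M_f$ to isolate the $\psi$-gap. The only cosmetic difference is in how the $K$ factors are bookkept --- the paper folds them into $(1-\rho^\mu_a)\rho^\mu_a=\nicefrac{K-1}{K^2}$ and $\varepsilon_{\stat,\abar}=\Ocal(\sqrt{K d_{\Fcal,\Psi}/|\Dcal|})$ whereas you track the mask probability and the coefficient separately --- but both accountings land on the same $\sqrt{K^3 d_{\Fcal,\Psi}/|\Dcal|}/\Delta_\Fcal$ rate.
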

Lemma~\ref{lem:ground_reward} ensures that the learned decoder on~\Eqref{eq:decode_f_psi} correctly separates the latent reward. In particular since $\psi$ ranges over $[0, 1]$, Lemma~\ref{lem:ground_reward} ensures $\max\left(\mathrm{Pr}(\psi(y, a) = r), \mathrm{Pr}(\psi(y, a) = 1 - r)\right) > 1 - o(1)$ under the behaviour policy.  Thus, if we can break symmetry, we can use $\psi$ to generate a reward signal and reduce to ordinary contextual bandit learning. The following lemma guarantees the correctness of the symmetry-breaking step.
\begin{lemma}[Breaking symmetry]
\label{lem:estimation_per_action}
Suppose Assumption~\ref{asmp:pi_bad} holds. For any $\abar \in \Acal$, if we have
$
|\Dcal| \geq \Ocal \left( \frac{K^3 d_{\Fcal, \Psi}}{\left(\min\{\eta \Delta_\Fcal, K c_m \}\right)^2} \right),
$
then, with high probability, $\psi'_{\abar,1} \geq \psi'_{\abar,0}$.
\end{lemma}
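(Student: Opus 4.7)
The plan is to combine Lemma~\ref{lem:ground_reward} (which pins $\psi_\abar$ down up to the sign ambiguity) with a concentration argument on $\varrhohat^{\pi_\bad^\abar}_\abar$, then check that the $\tfrac{1}{2}$ threshold of the symmetry-breaking step sits strictly between the ``correct'' and ``flipped'' population values under Assumption~\ref{asmp:pi_bad}(b). First I would invoke Lemma~\ref{lem:ground_reward} to conclude $|\psi_{\abar,1} - \psi_{\abar,0}| \geq 1 - \varepsilon_\Psi$ with $\varepsilon_\Psi = \Ocal\bigl(\tfrac{1}{\Delta_\Fcal}\sqrt{K^3 d_{\Fcal,\Psi}/|\Dcal|}\bigr)$. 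Since $\psi_\abar \in [0,1]$, this forces one of two regimes: either (A) $\psi_{\abar,1} \geq 1 - \varepsilon_\Psi$ and $\psi_{\abar,0} \leq \varepsilon_\Psi$, or (B) $\psi_{\abar,0} \geq 1 - \varepsilon_\Psi$ and $\psi_{\abar,1} \leq \varepsilon_\Psi$.

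Next I would identify the population target of $\varrhohat^{\pi_\bad^\abar}_\abar$. Applying Assumption~\ref{asmp:x_CI} gives $\E[\psi_\abar(y,\abar)\mid x,\abar] = R(x,\abar)\psi_{\abar,1} + (1 - R(x,\abar))\psi_{\abar,0}$, and since the data is collected under $\mu = \pi_\unif$ the conditional distribution of $x$ given $a = \abar$ is exactly $d_0$. Taking the ratio of the expectations of numerator and denominator therefore yields
\begin{align*}
\E[\varrhohat^{\pi_\bad^\abar}_\abar] \;\to\; \rho^{\pi_\bad^\abar}_\abar\,\psi_{\abar,1} + (1 - \rho^{\pi_\bad^\abar}_\abar)\,\psi_{\abar,0},
\end{align*}
which is at most $\rho^{\pi_\bad^\abar}_\abar + \varepsilon_\Psi \leq \tfrac{1}{2} - \eta + \varepsilon_\Psi$ in regime (A) and at least $1 - \rho^{\pi_\bad^\abar}_\abar - \varepsilon_\Psi \geq \tfrac{1}{2} + \eta - \varepsilon_\Psi$ in regime (B), using the WLOG assumption $\rho^{\pi_\bad^\abar}_\abar \leq \tfrac{1}{2} - \eta$.

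I then need to replace this population quantity by $\varrhohat^{\pi_\bad^\abar}_\abar$. The denominator is a bounded average whose mean is $d^{\pi_\bad^\abar}_\abar/K \geq c_m/K$ by Assumption~\ref{asmp:pi_bad}(a), so a Hoeffding (or Bernstein) bound plus the standard ``ratio of concentrated estimates'' argument gives $|\varrhohat^{\pi_\bad^\abar}_\abar - \E[\varrhohat^{\pi_\bad^\abar}_\abar]| \leq \widetilde{\Ocal}(\tfrac{K}{c_m}\sqrt{d_{\Fcal,\Psi}/|\Dcal|})$ uniformly in $\psi \in \Psi$ (the uniform-in-$\psi$ version is needed because $\psi_\abar$ is data-dependent, and contributes the $d_{\Fcal,\Psi}$ factor). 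Choosing $|\Dcal| \geq \Ocal(K^3 d_{\Fcal,\Psi}/\min\{\eta\Delta_\Fcal, Kc_m\}^2)$ makes both $\varepsilon_\Psi$ and this concentration error at most, say, $\eta/4$, so in regime (A) $\varrhohat^{\pi_\bad^\abar}_\abar < \tfrac{1}{2}$ and the algorithm keeps $\psi'_\abar = \psi_\abar$, while in regime (B) $\varrhohat^{\pi_\bad^\abar}_\abar > \tfrac{1}{2}$ and the algorithm flips to $\psi'_\abar = 1 - \psi_\abar$; in either case $\psi'_{\abar,1} \geq \psi'_{\abar,0}$, which is the claim.

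The main obstacle I expect is the uniform concentration for the data-dependent decoder $\psi_\abar$ in the ratio estimator: the numerator and denominator must be controlled simultaneously over $\Psi$ (and the denominator must be bounded away from $0$ with high probability), which is where the $K/c_m$ and $d_{\Fcal,\Psi}$ factors enter and where the precise form of the sample-complexity threshold comes from. Once this step is done, the rest is essentially plugging numbers into the two-regime dichotomy above.
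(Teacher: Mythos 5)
Your proposal is correct and follows essentially the same route as the paper's proof: invoke Lemma~\ref{lem:ground_reward} to reduce to the two near-extremal regimes for $(\psi_{\abar,1},\psi_{\abar,0})$, show the population value of the decoded baseline reward is $\rho^{\pi_\bad^\abar}_\abar\psi_{\abar,1}+(1-\rho^{\pi_\bad^\abar}_\abar)\psi_{\abar,0}$ and hence sits on the correct side of $\tfrac12$ by a margin of roughly $\eta$ in each regime, and then control the ratio estimator $\varrhohat^{\pi_\bad^\abar}_\abar$ via uniform concentration with the denominator bounded below through $c_m$. The paper's appendix carries out exactly this case split (on the sign of $\psi_{\abar,1}-\psi_{\abar,0}$) and the same ratio-concentration bookkeeping, arriving at the same sample-complexity threshold.
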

Combining these two lemmas above as well as the $\CB$ oracle establishes the proof of Theorem~\ref{thm:main_thm}, and the detailed proof can be found in Appendix~\ref{sec:proofs}.
\end{proof}
\vspace{-3mm}

\section{Empirical Evaluations}
\label{sec:experiments}
In this section, we provide empirical evaluations in simulated environments created using supervised classification datasets (Sec.~\ref{sec:exp_openml}) %
and an open source fMRI simulator (Sec.~\ref{sec:bci_exp}).
We evaluate our approach by comparing: (1) CB: Contextual Bandits with exact reward, (2) \oldIGL: The method proposed by~\citet{xie2021interaction} which assumes the feedback vector contains no information about the context and reward, and (3) \newIGL: The proposed method which assumes that the feedback vector could contain information about the action but is conditionally independent of the context given the reward. Note that contextual bandits (CB) is a skyline compared to both \oldIGL and \newIGL, since it need not disambiguate the latent reward. All methods use logistic regression with a linear representation. At test time, each method takes the argmax of the policy. %
Following the practical instantiation of Assumption~\ref{asmp:pi_bad} (Section~\ref{sec:sym_brk}), we know if the dataset has a balanced action distribution (no action belongs to more than 50\% of the samples), selecting uniformly random policy $\pi_\unif$ as $\pi_\bad^a$ for all $a \in \Acal$ satisfies Assumption~\ref{asmp:pi_bad}. Therefore, in this section, our experiments are all based on the dataset with a balanced action distribution, and we select $\pi_\bad^a = \pi_\unif$ for all $a \in \Acal$.

\subsection{Large-scale Experiments with OpenML CC-18 Datasets}
\label{sec:exp_openml}
To verify that our proposed algorithm scales to a variety of tasks, we evaluate performance on more than 200 datasets from the publicly available OpenML Curated Classification Benchmarking Suite~\citep{vanschoren2015openml,casalicchio2019openml,feurer2021openml,bischl2021openml}. OpenML CC-18 datasets are licensed under CC-BY license\footnote{\url{https://creativecommons.org/licenses/by/4.0/}} and the platform and library are licensed under the BSD (3-Clause) license\footnote{\url{https://opensource.org/licenses/BSD-3-Clause}}.
At each time step, the context $x_t$ is generated uniformly at random. 
The learner selects an action $a_t \in \{0,\dots,K-1\}$ as the predicted label of $x_t$ (where $K$ is the total number of actions available in the environment). The binary reward $r_t$ is the correctness of the prediction label $a_t$. The feedback $y_t$ is a two dimensional vector $(a_t,r_t)$. Each dataset has a different sample size ($N$) and a different set of available actions ($K$). We sample datasets with 3 or more actions, and with a balanced action distribution (no action belongs to more than $50\%$ of the samples) to satisfy Assumption 4.
We use $90\%$ of the data for training and the remaining $10\%$ for evaluation. Additional details of setting up the experiment are in Appendix~\ref{sec:exp_detail}. The results are averaged over $20$ trials and shown in Table~\ref{tab:openml_action_exclusive}. 

\begin{table*}[!htb]
    \centering
    \resizebox{\textwidth}{!}{
    \begin{tabular}{|c|c|c|c||c|c||c|c|}
    \hline
         \begin{tabular}[x]{@{}c@{}}Dataset\\Criteria\end{tabular} & \begin{tabular}[x]{@{}c@{}}Dataset\\Count\end{tabular} & \begin{tabular}[x]{@{}c@{}}Constant\\Action\end{tabular} & \begin{tabular}[x]{@{}c@{}}CB Policy \\Accuracy (\%)\end{tabular} & \begin{tabular}[x]{@{}c@{}}IGL (full CI)\\Policy Accuracy (\%)\end{tabular} &  \begin{tabular}[x]{@{}c@{}}Performance\\w.r.t CB\end{tabular} & \begin{tabular}[x]{@{}c@{}}AI-IGL\\Policy Accuracy (\%)\end{tabular} & \begin{tabular}[x]{@{}c@{}}Performance\\w.r.t CB\end{tabular}\\
        \hline
        $K\geq3$ & $271$ & $25.28\pm2.96$ & $\mathbf{57.98\pm5.65}$ & $15.65\pm2.30$ & $0.30\pm0.04$ & $\mathbf{35.74\pm1.45}$ & $\mathbf{0.59\pm0.02}$\\
        \hline
        $K\geq$, $N\geq70000$ & 83 & $22.57\pm3.14$ & $\mathbf{58.41\pm5.04}$ & $11.91\pm1.80$ & $0.22\pm0.04$ & $\mathbf{50.11\pm2.98}$ & $\mathbf{0.79\pm0.03}$\\
        \hline
    \end{tabular}}
    \caption{Results in the OpenML environments with two-dimensional action-inclusive feedback. Average and standard error reported over 20 trials for each algorithm. `Performance w.r.t.~CB' reports the ratio of an IGL method's policy accuracy over CB policy accuracy. %
    }
    \vspace{-3mm}
    \label{tab:openml_action_exclusive}
\end{table*}

\subsection{Experiments on Realistic fMRI Simulation}
\label{sec:bci_exp}

We also evaluate our approach on a simulated BCI experiment, where a human imagines a number and the agent faces a number-decoding task (Fig.~\ref{fig:bci}). We simulate an fMRI based BCI setup, recording the fMRI BOLD (Blood Oxygenation Level Dependent) response in the parietal cortex of a human brain. We have selected the parietal cortex since both numerical processing~\citep{santens2010number} and match-mismatch processing are reported to involve this region~\citep{male2020quest}, and in principle it could feasibly be recorded by fNIRS (with similar spatial resolution to fMRI)~\citep{naseer2015fnirs}. The task requires an intelligent agent to decode the numbers the human imagines. This task is motivated by real-world examples where an implant could assist a person, user, or patient by communicating their mental states to the outside world, whether for therapeutic purposes or to control hardware or software. 

The task setup consists of the following steps---(1) Human Imagination: A human participant imagines a number from a set of digits (i.e., $7,8,9$), (2) Agent Decoding: an agent (Full CI IGL or AI-IGL) decodes the number being imagined using only the fMRI signals of the human participant, with no labels and no supervised pretraining, (3) Human Perception: the agent visually displays the number it decoded to the human, which the human brain perceives, and hence the fMRI signals would contain patterns related to the perception of the agent's guess, (4) Human Judgment: Once the human judges whether the agent-guessed number yields a match or a mismatch to the number they imagined, fMRI signals reflect the participant's judgment. This simple setup mimics many relevant applications of BCI with a realistic simulation of fMRI signals (generalizable to the more cost-efficient fNIRS). We used a third party simulation software, offering a strong test bed for the algorithms compared here.

We use an open source simulator~\citep{ellis2020facilitating} to simulate BCI-like, interactive fMRI signals for the imagination, perception, and judgment phases described above. This simulator leverages real human fMRI data~\citep{bejjanki2017noise} to simulate signals for different regions of the brain. Prior neuroscience studies have shown that the parietal cortices in the human brain (in particular the posterior parietal cortex) process numbers~\citep{santens2010number}, while the frontoparietal cortices  process visual attention as well as match-mismatch judgments~\citep{male2020quest}. These studies, together with the low temporal resolution of fMRI/fNIRS~\citep{glover2011overview}, warrant the assumption we make here: that number perception and judgments signals can both be decoded from the same region, i.e., the parietal cortex. Therefore, per AI-IGL's theoretical assumptions, the two signals can be mixed in the brain's implicit feedback signal, here simulated as an fMRI or fNIRS signal, in arbitrary ways. Thus, we selected a region of interest (ROI) in the parietal cortex to cover the mixed feedback of perceiving the numbers and making a match-mismatch judgments. This simulated BCI setting is precisely the sort of realistic condition that our proposed AI-IGL solution is designed to handle well. Moreover, it is quite possible to conduct this experiment in principle using fMRI, or more realistically using fNIRS–which is portable, noninvasive, easy to use, and can be focused on a specific patch on the surface of the brain. %
Each event (perception, imagination, judgment) evokes different activity in the human brain, within the same set of $4\times4\times4$ voxels, generating a 64-dimensional activity vector. We generate a time course of 666 trials, each of them including a sequence of events where an imagination event 
is followed by a response from the agent, then a perception event, and a judgment event. Thus, the 666 trials include 1998 simulated brain events total. Each event is 2 seconds long, with an inter-stimulus  (ISI) of 7 seconds, and a temporal resolution of 10 seconds. For each event, the simulator combines spatiotemporal noise (trial noise) and %
voxel activation patterns to simulate activity signals. %
We use the imagination signal as context (Fig.~\ref{fig:bci}, Context) and a weighted average of the perception and judgement signals to simulate the brain's mixed feedback signal (Fig.~\ref{fig:bci}, Feedback).

We evaluate the performance of both full CI IGL and AI-IGL on this simulated and interactive number-decoding task, under varying levels of feedback noise. The results are shown in Table~\ref{tab:bci_results}. We found that full CI IGL failed at the task, remaining at chance level guessing. However, the novel algorithm proposed here, AI-IGL, was capable of learning to dissociate the human judgment from a mixed feedback signal. %
The significance of these findings are two-fold. First, they offer a realistic approach to simulating BCI experiments in order to test theoretical advances before conducting real world experiments. Second, our results further validate the theoretical advances of AI-IGL compared to the previous IGL approach for a realistic interactive brain-computer interface.

\begin{table*}[!htb]
    \centering
    \begin{tabular}{|c|c|c|c|}
    \hline
         Algorithm & \begin{tabular}[x]{@{}c@{}} 1\% noise\end{tabular}    & \begin{tabular}[x]{@{}c@{}}5\% noise\end{tabular}  & \begin{tabular}[x]{@{}c@{}}10\% noise\end{tabular}  \\
        \hline
        \hline
        \oldIGL & $32.60\pm0.24$  & $33.75\pm0.29$ & $33.33 \pm 0.29$\\
        \hline
        \newIGL & $\mathbf{89.10 \pm 3.58}$ & $\mathbf{76.60 \pm 4.09}$ & $\mathbf{64.25 \pm 4.16}$\\
        \hline
    \end{tabular}
    \caption{%
    Policy accuracy for the simulated BCI experiments with action-inclusive feedback. Average and standard error reported over 20 trials for each algorithm.
    }
    \vspace{-6mm}
    \label{tab:bci_results}
\end{table*}

\section{Discussion}
\label{sec:discussion}
We have presented a new approach to solving Interaction-Grounded Learning, in which an agent learns to interact with the environment in the absence of any explicit reward signals. Compared to a prior solution~\citep{xie2021interaction}, the proposed AI-IGL approach %
removes the assumption of conditional independence of actions and the feedback vector %
by treating the latent reward as a latent state. It thereby provably solves IGL for action-inclusive feedback vectors. 
By viewing the feedback as containing an action-(latent) reward pair which is an unobserved latent space, we propose latent reward learning using a contrastive learning approach. 
This solution concept 
naturally connects to latent state discovery in rich-observation reinforcement learning~\citep[e.g.,][]{dann2018oracle,du2019provably,misra2020kinematic}. On the other hand, 
different from rich-observation RL, the problem of IGL also contains a unique challenge in identifying the semantic meaning of the decoded class, which is addressed by a symmetry-breaking procedure.  In this work, we focus on binary latent rewards, %
for which symmetry breaking is possible using one policy (per action).  Breaking symmetry in more general latent reward spaces is a topic for future work. 
A possible negative societal impact of this work can be performance instability, %
especially with inappropriate use of the techniques in risk-sensitive applications. 
Barring intentional misuse, we envision several potential benefits of the proposed approach. The proposed algorithm broadens the scope of IGL's feasibility for real-world applications. %
Imagine an agent being trained to interpret the brain signals of a user to control a prosthetic arm. %
The brain’s response (feedback vector) to an action is a neural signal that may contain information about the action itself. 
This is so prevalent in neuroimaging that 
fMRI studies routinely use specialized techniques to orthogonalize different information (e.g. action and reward) within the same signal~\citep{momennejad2012human,momennejad2013encoding,belghazi2018mine,shah2020hardness}.   %
Another example is a continually self-calibrating eye tracker, %
used by people with motor disabilities such as ALS~\citep{hansen2004gaze,liu2010eye,mott2017improving,gao2021x2t}. %
A learning agent adapting to the ability of such users %
can encounter feedback directly influenced by the calibration correction action.
The proposed approach is
a stepping stone on the path to solving IGL for complex interactive settings, overcoming the need for explicit rewards as well as explicit separation of action and feedback information.

\section*{Acknowledgments}
The authors would like to thank Mark Rucker for sharing methods and code for computing and comparing diagnostic features of OpenML datasets.
NJ acknowledges funding support from ARL Cooperative Agreement W911NF-17-2-0196, NSF IIS-2112471, NSF CAREER award, and Adobe Data Science Research Award.
\bibliographystyle{plainnat}
\bibliography{ref}

\clearpage
\allowdisplaybreaks

\appendix
\onecolumn

\begin{center}
{\LARGE Appendix}
\end{center}

\section{Detailed Proofs}
\label{sec:proofs}

\begin{proof}[\pfname{Proposition}{\ref{prop:decoding}}]
Over this proof, we follow the definition of \Eqref{eq:def_far_psiar} for each $(f,\psi) \in \Fcal \times \Psi$.
For any $\pi$, let $d^\pi_a \coloneqq \sum_{x}d_0(x) \pi(a|x)$ and $\rho^\pi_a \coloneqq \nicefrac{1}{d^\pi_a}\sum_{x}d_0(x) \pi(a|x) \Pr(r=1|x,a)$, then
\begin{align}
\E_{\mu_a}\left[f(x) \psi(y)\right] = &~ \frac{1}{d^\mu_a} \sum_{x,y} d_0(x) \mu(a|x) f(x) \Pr(y|a,r = 0)\psi(y) \Pr(r=0|x,a)
\\
&~ + \frac{1}{d^\mu_a} \sum_{x,y} d_0(x) \mu(a|x) f(x) \Pr(y|a,r = 1) \psi(y) \Pr(r=1|x,a)
\tag{by $x \indep y | r,a$}
\\
\overset{\text{(a)}}{=} &~ (1 - \rho^\mu_a ) f_{a,0} \psi_{a,0} + \rho^\mu_a f_{a,1} \psi_{a,1},
\end{align}
and
\begin{align}
\E_{\mu_a}\left[ f(x) \right] \E_{\mu_a}\left[ \psi(y) \right] = &~ \left( \frac{1}{d^\mu_a} \sum_{x} d_0(x) \mu(a|x) f(x) \right) \left( \frac{1}{d^\mu_a} \sum_{x,y} d_0(x) \mu(a|x) \Pr(y|a) \psi(y) \right)
\\
\overset{\text{(b)}}{=} &~ \left( (1 - \rho^\mu_a) f_{a,0} + \rho^\mu_a f_{a,1} \right) \left( (1 - \rho^\mu_a) \psi_{a,0} + \rho^\mu_a \psi_{a,1} \right)
\\
= &~ (1 - \rho^\mu_a)^2 f_{a,0} \psi_{a,0} + \left( \rho^\mu_a \right)^2 f_{a,1} \psi_{a,1} + (1 - \rho^\mu_a) \rho^\mu_a (f_{a,0} \psi_{a,1} + f_{a,1} \psi_{a,0}).
\end{align}
To see why (a) and (b) hold, we use the following argument. By definition of $f_{a,r}$ and $\psi_{a,r}$, we have
\begin{align}
f_{a,0} = &~ \sum_x \Pr(x|a,r=0) f_a(x,a)
\\
= &~ \sum_x \frac{\Pr(x,a,r=0) f_a(x,a)}{\Pr(a,r=0)}
\\
= &~ \frac{\sum_x d_0(x) \mu(a|x) f_a(x,a) \Pr(r=0|x,a)}{\sum_x d_0(x) \mu(a|x) \Pr(r=0|x,a)}.
\\
\Longrightarrow d^\mu_a (1 - \rho^\mu_a) f_{a,0} = &~ \sum_{x} d_0(x) \mu(a|x) f_a(x,a) \Pr(r=0|x,a).
\end{align}
Then, we have
\begin{align}
&~ \frac{1}{d^\mu_a} \sum_{x,y} d_0(x) \mu(a|x) f_a(x,a) \Pr(y|a,r = 0)\psi_a(y,a) \Pr(r=0|x,a)
\\
= &~ \frac{1}{d^\mu_a} \sum_{x} d_0(x) \mu(a|x) f_a(x,a) \Pr(r=0|x,a) \sum_{y} \Pr(y|a,r = 0)\psi_a(y,a)
\\
= &~ \frac{\psi_{a,0}}{d^\mu_a} \sum_{x} d_0(x) \mu(a|x) f_a(x,a) \Pr(r=0|x,a) 
\\
= &~ (1 - \rho^\mu_a ) f_{a,0} \psi_{a,0}.
\end{align}
Similar procedure also induces the remaining terms of (a) and (b).

Therefore, combining the two equalities above, we obtain
\begin{align}
&~ \E_{\mu_a}\left[f(x) \psi(y)\right] - \E_{\mu_a}\left[ f(x) \right] \E_{\mu_a}\left[ \psi(y) \right]
\\
= &~ (1 - \rho^\mu_a ) \rho^\mu_a (f_{a,0} \psi_{a,0} + f_{a,1} \psi_{a,1}) - (1 - \rho^\mu_a) \rho^\mu_a (f_{a,0} \psi_{a,1} + f_{a,1} \psi_{a,0})
\\
= &~ (1 - \rho^\mu_a ) \rho^\mu_a (f_{a,0} \psi_{a,0} + f_{a,1} \psi_{a,1} - f_{a,0} \psi_{a,1} - f_{a,1} \psi_{a,0})
\\
\label{eq:dev_L_a}
= &~ (1 - \rho^\mu_a ) \rho^\mu_a (f_{a,1} - f_{a,0}) (\psi_{a,1} - \psi_{a,0}).
\end{align}
This completes the proof.
\end{proof}

\begin{proof}[\bf\em Lower bound of $\Delta_\Fcal$ when $\mu = \pi_\unif$]
For any $f \in \Fcal$, let $V_f(\pi) \coloneqq \sum_x \E_{d_0 \times \pi}[f(x,a)] \in [0,1]$. Then,
\begin{align}
&~ \E_\mu\left[ f(x,\abar) | \abar, r = 1 \right] - \E_\mu\left[ f(x,\abar) | \abar, r = 0 \right]
\\
= &~ \frac{\sum_x \Pr(x,\abar,r=1) f(x,\abar)}{\sum_x \Pr(x,\abar,r=1)} - \frac{\sum_x \Pr(x,\abar,r=0) f(x,\abar)}{\sum_x \Pr(x,\abar,r=0)}
\\
= &~ \frac{\sum_x d_0(x) \mu(\abar|x) R(x,\abar) f(x,\abar)}{\sum_x d_0(x) \mu(\abar|x) R(x,\abar)} - \frac{\sum_x d_0(x) \mu(\abar|x) f(x,\abar) (1 - R(x,\abar))}{\sum_x d_0(x) \mu(\abar|x) (1 - R(x,\abar))}
\\
= &~ \frac{\sum_x d_0(x) R(x,\abar) f(x,\abar)}{\sum_x d_0(x) R(x,\abar)} - \frac{\sum_x d_0(x) f(x,\abar) (1 - R(x,\abar))}{\sum_x d_0(x) (1 - R(x,\abar))}
\tag{$\mu = \pi_\unif$}
\\
= &~ \frac{\sum_x d_0(x) f(x,\abar)  R(x,\abar)}{V(\pi_\abar)} - \frac{\sum_x d_0(x) f(x,\abar) (1 - R(x,\abar))}{1 - V(\pi_\abar)}
\tag{$\pi_\abar$ denotes the constant policy with action $\abar$}
\\
= &~ \frac{\sum_x d_0(x) f(x,\abar)  R(x,\abar) - V_f(\pi_\abar)V(\pi_\abar)}{V(\pi_\abar)}
\\
&~ - \frac{\sum_x d_0(x) f(x,\abar) (1 - R(x,\abar)) - V_f(\pi_\abar)(1 - V(\pi_\abar))}{1 - V(\pi_\abar)}
\\
= &~ \frac{\Cov_{\pi_\abar}(f,R)}{V(\pi_\abar)} - \frac{\Cov_{\pi_\abar}(f,1-R)}{1 - V(\pi_\abar)}
\\
\overset{\text{(a)}}{=} &~ \frac{\Cov_{\pi_\abar}(f,R)}{V(\pi_\abar)} + \frac{\Cov_{\pi_\abar}(f,R)}{1 - V(\pi_\abar)}
\\
= &~ \frac{\Cov_{\pi_\abar}(f,R)}{V(\pi_\abar)(1 - V(\pi_\abar))}
\\
\Longrightarrow &~ \left|\E_\mu\left[ f(x,\abar) | \abar, r = 1 \right] - \E_\mu\left[ f(x,\abar) \right| \abar, r = 0 \right]| \geq 4 |\Cov_{\pi_\abar}(f,R)|,
\end{align}
where (a) follows from
\begin{align}
\Cov_{\pi_\abar}(f,1-R) = &~ \E_{\pi_\abar}[(f(x,a) - V_f(\pi_\abar))(1 - R(x,a) - 1 - V(\pi_\abar))]
\\
= &~ -\E_{\pi_\abar}[(f(x,a) - V_f(\pi_\abar))(R(x,a) - V(\pi_\abar))]
\\
= &~ -\Cov_{\pi_\abar}(f,R).
\end{align}
This completes the proof.
\end{proof}

\begin{proof}[\pfname{Lemma}{\ref{lem:ground_reward}}]
Let,
\begin{align}
(\ftilde_{\abar},\psitilde_{\abar}) \leftarrow \argmax_{(f,\psi) \in \Fcal \times \Psi} \Jcal_{\abar,\mu}(f,\psi).
\end{align}

Over this proof, we define
\begin{align}
\Delta_\Fcal = \max_{f \in \Fcal} (f_{a,1} - f_{a,0}).
\end{align}

Now, for $\Jcal_{\abar,\Dcal}(f,\psi)$ with any $(f,\psi)$, we also have with probability at least $1 - \delta$
\begin{align}
\left| \Jcal_{\abar,\Dcal}(f,\psi) - \Jcal_{\abar,\mu}(f,\psi) \right| \leq \varepsilon_{\stat,\abar}.
\end{align}
This means
\begin{align}
&~ \Jcal_{\abar,\Dcal}(\ftilde_{\abar},\psitilde_{\abar}) \leq  \Jcal_{\abar,\Dcal}(f_{\abar},\psi_{\abar})
\\
\Longrightarrow &~ \Jcal_{\abar,\mu}(\ftilde_{\abar},\psitilde_{\abar}) - 2 \varepsilon_{\stat,\abar} \leq \Jcal_{\abar,\mu}(f_{\abar},\psi_{\abar})
\\
\Longrightarrow  &~ \underbrace{(1 - \rho^\mu_a) \rho^\mu_a}_{=\nicefrac{K - 1}{K^2},~\text{as $\mu = \pi_\unif$}} \cdot (\ftilde_{\abar,1} - \ftilde_{\abar,0}) (\psitilde_{\abar,1} - \psitilde_{\abar,0}) - 2 \varepsilon_{\stat,\abar} \leq (1 - \rho^\mu_a) \rho^\mu_a (f_{\abar,1} - f_{\abar,0}) (\psi_{\abar,1} - \psi_{\abar,0}).
\tag{by \Eqref{eq:dev_L_a}}
\\
\Longrightarrow &~ (f_{\abar,1} - f_{\abar,0}) (\psi_{\abar,1} - \psi_{\abar,0}) \geq (\ftilde_{\abar,1} - \ftilde_{\abar,0}) (\psitilde_{\abar,1} - \psitilde_{\abar,0}) - \frac{2 K^2}{(K - 1)} \varepsilon_{\stat,\abar}.
\end{align}
This implies
\begin{align}
\label{eq:f_guarantee}
|f_{\abar,1} - f_{\abar,0}| \geq  &~ \Delta_\Fcal - \frac{2 K^2}{(K - 1)} \varepsilon_{\stat,\abar}
\\
|\psi_{\abar,1} - \psi_{\abar,0}| \geq &~ 1 - \frac{2 K^2}{\Delta_\Fcal (K - 1)} \varepsilon_{\stat,\abar}
\\
\label{eq:psi_guarantee}
\Longrightarrow |\psi^\star_{\abar,1} - \max\{\psi_{\abar,1}, \psi_{\abar,0}\}|, |\min\{\psi_{\abar,1}, \psi_{\abar,0}\} - \psi^\star_{\abar,0}| \leq &~ \frac{2 K^2}{\Delta_\Fcal (K - 1)} \varepsilon_{\stat,\abar}.
\end{align}
This completes the proof.
\end{proof}

\begin{proof}[\pfname{Lemma}{\ref{lem:estimation_per_action}}]
We now provide the proof for any fixed $\abar \in \Acal$.
We define $d^\pi_\abar \coloneqq \sum_{x}d_0(x) \pi(\abar|x)$, and $R_\psi(x,a) = \E[\psi(y,a)|x,a]$ as the decoded reward by $\psi$ for all $(x,a,\psi) \in \Xcal \times \Acal \times \Psi$.
Also, let $\rho^{\pi_\bad^{\abar}}_{\abar}$, $\rhohat^{\pi_\bad^{\abar}}_{\abar}$, $\varrho^{\pi_\bad^{\abar}}_{\abar}$, $\varrhohat^{\pi_\bad^{\abar}}_{\abar}$ be,
\begin{align}
\label{eq:def_rho}
&~ \rho^{\pi_\bad^{\abar}}_{\abar} = \frac{\overbrace{\textstyle\sum_{x}d_0(x) \pi_\bad^{\abar}(\abar|x) R(x,\abar)}^{\eqqcolon\text{(I).pop}}}{\underbrace{\textstyle\sum_{x}d_0(x) \pi_\bad^{\abar}(\abar|x)}_{\eqqcolon\text{(II).pop}}},  \qquad  \rhohat^{\pi_\bad^{\abar}}_{\abar} = \frac{\overbrace{\textstyle\sum_{(x,a,y) \sim \Dcal} \pi_\bad^{\abar}(a|x) R(x,a) \1(a = \abar)}^{\eqqcolon\text{$\sum_{(x,a,y) \sim \Dcal} \1(a = \abar) \cdot $ (I).emp}}}{\underbrace{\textstyle\sum_{(x,a,y) \sim \Dcal} \pi_\bad^{\abar}(a|x) \1(a = \abar)}_{\eqqcolon\text{$\sum_{(x,a,y) \sim \Dcal} \1(a = \abar) \cdot $(II).emp}}}
\\
\label{eq:def_varrho}
&~ \varrho^{\pi_\bad}_{\abar} = \frac{\overbrace{\textstyle\sum_{x}d_0(x) \pi_\bad^{\abar}(\abar|x) R_{\psi_\abar}(x,\abar)}^{\coloneqq\text{(III).pop}}}{\sum_{x} d_0(x) \pi_\bad^{\abar}(\abar|x)}, \qquad \varrhohat^{\pi_\bad^{\abar}}_{\abar} = \frac{\overbrace{\textstyle\sum_{(x,a,y) \sim \Dcal}\pi_\bad^{\abar}(a|x) \psi_a(y,a) \1(a = \abar)}^{\eqqcolon\text{$\sum_{(x,a,y) \sim \Dcal} \1(a = \abar) \cdot $ (III).emp}}}{\sum_{(x,a,y) \sim \Dcal}\pi_\bad^{\abar}(a|x) \1(a = \abar)}
\end{align}
Over this section, we define $\varepsilon_{\stat,\abar}$ as,
\begin{align}
\varepsilon_{\stat,\abar} \coloneqq \Ocal\left( \sqrt{\frac{K d_{\Fcal, \Psi}}{|\Dcal|}} \right) \geq \Ocal\left(\sqrt{\frac{d_{\Fcal, \Psi}}{\sum_{(x,a,y) \sim \Dcal} \1(a = \abar)}}\right).
\tag{by~\citep[Lemma A.1]{xie2021policy}}
\end{align}

Then, we know with probability at least $1 - \delta$,
\begin{align}
\label{eq:rho_concentration}
\left| \text{(I).pop} - \text{(I).emp}\right|, \left| \text{(II).pop} - \text{(II).emp}\right|, \left| \text{(III).pop} - \text{(III).emp}\right| \leq \varepsilon_{\stat,\abar}
\end{align}
by a standard concentration argument. 

We now show when line~\ref{step:sym_brk_1}-\ref{step:sym_brk_2} in Algorithm~\ref{alg:IGL_x_CI} correctly break the symmetry. By Assumption~\ref{asmp:pi_bad}, we have $\rho^{\pi_\bad^{\abar}}_{\abar} \leq \frac{1}{2} - \eta$. In addition, by \Eqref{eq:rho_concentration},
\begin{align}
\label{eq:rho_concentration_final}
&~ \left| \rhohat^{\pi_\bad^{\abar}}_{\abar} - \rho^{\pi_\bad^{\abar}}_{\abar} \right|, \left| \varrhohat^{\pi_\bad^{\abar}}_{\abar} - \varrho^{\pi_\bad^{\abar}}_{\abar} \right| \geq 2 \varepsilon_{\stat,\abar} + \frac{2 \varepsilon_{\stat,\abar}^2}{c_m - \varepsilon_{\stat,\abar}}.
\end{align}

Recall the definition of $\rho^{\pi_\bad^{\abar}}_{\abar}$ and $\varrho^{\pi_\bad^{\abar}}_{\abar}$ (\Eqref{eq:def_rho} and \Eqref{eq:def_varrho}), we have
\begin{align}
\rho^{\pi_\bad^{\abar}}_{\abar} - \varrho^{\pi_\bad^{\abar}}_{\abar} = \frac{\sum_{x}d_0(x) \pi_\bad^{\abar}(\abar|x) \E[\psi^\star(y,\abar) - \psi_\abar(y,\abar)|x,\abar]}{\sum_{x}d_0(x) \pi_\bad^{\abar}(\abar|x) R_{\psi_\abar}(x,\abar)}.
\end{align}
According to \Eqref{eq:psi_guarantee}, if $\psi_{\abar,1} > \psi_{\abar,0}$, we have
\begin{align}
\varrho^{\pi_\bad^{\abar}}_{\abar} \leq &~ \rho^{\pi_\bad^{\abar}}_{\abar} + \frac{2 K^2}{(K - 1)} \varepsilon_{\stat,\abar}
\\
\Longrightarrow \varrhohat^{\pi_\bad^{\abar}}_{\abar} \leq &~ \rho^{\pi_\bad^{\abar}}_{\abar} + 2 \varepsilon_{\stat,\abar} + \frac{2 \varepsilon_{\stat,\abar}^2}{c_m - \varepsilon_{\stat,\abar}},
\tag{by \Eqref{eq:rho_concentration_final}}
\end{align}
otherwise,
\begin{align}
&~ \rho^{\pi_\bad^{\abar}}_{\abar} - \varrho^{\pi_\bad^{\abar}}_{\abar} = 1 - \frac{\sum_{x}d_0(x) \pi_\bad(\abar|x) \E[1 - \psi_\abar(y,\abar) - \psi^\star(y,\abar) |x,\abar]}{\sum_{x}d_0(x) \pi_\bad(\abar|x) R_{\psi_\abar}(x,\abar)}
\\
\Longrightarrow &~ \varrho^{\pi_\bad}_{\abar} \geq 1 - \rho^{\pi_\bad}_{\abar} - \frac{2 K^2}{\Delta_\Fcal (K - 1)} \varepsilon_{\stat,\abar}
\\
\Longrightarrow &~ \varrhohat^{\pi_\bad^{\abar}}_{\abar} \geq 1 - \rho^{\pi_\bad^{\abar}}_{\abar} - \frac{2 K^2}{\Delta_\Fcal (K - 1)} \varepsilon_{\stat,\abar} - 2 \varepsilon_{\stat,\abar} - \frac{2 \varepsilon_{\stat,\abar}^2}{c_m - \varepsilon_{\stat,\abar}}.
\tag{by \Eqref{eq:rho_concentration_final}}
\end{align}

To guarantee the correctness of the symmetry breaking step (line~\ref{step:sym_brk_1}-\ref{step:sym_brk_2} in Algorithm~\ref{alg:IGL_x_CI}), we need
\begin{enumerate}
    \item If $f_{\abar,1} > f_{\abar,0}$, $\varrhohat^{\pi_\bad^{\abar}}_{\abar} \leq \frac{1}{2}$,
    \item Otherwise, $\varrhohat^{\pi_\bad^{\abar}}_{\abar} > \frac{1}{2}$.
\end{enumerate}
That requires,
\begin{align}
\frac{2 K^2}{\Delta_\Fcal (K - 1)} \varepsilon_{\stat,\abar} + 2 \varepsilon_{\stat,\abar} + \frac{2 \varepsilon_{\stat,\abar}^2}{c_m - \varepsilon_{\stat,\abar}} \leq \eta.
\end{align}
This can be induced by
\begin{align}
\varepsilon_{\stat,\abar} \leq \Ocal\left( \min\left\{\nicefrac{\eta \Delta_\Fcal}{K}, c_m \right\} \right)
\quad \left( \Longleftarrow \quad
|\Dcal| \geq \Ocal \left( \frac{K^3 d_{\Fcal, \Psi}}{\left(\min\{\eta \Delta_\Fcal, K c_m \}\right)^2} \right) \right). \tag*{\qedhere}
\end{align}
\end{proof}

\begin{proof}[\pfname{Theorem}{\ref{thm:main_thm}}]
Combining Lemma~\ref{lem:ground_reward} and Lemma~\ref{lem:estimation_per_action}, we know for any $\abar \in \Acal$
\begin{align}
|\psi^\star_{\abar,1} - \psi_{\abar,1}|, |\psi_{\abar,0} - \psi^\star_{\abar,0}| \leq &~ \frac{2 K^2}{\Delta_\Fcal (K - 1)} \varepsilon_{\stat,\abar}.
\end{align}
Then,
\begin{align}
&~ V(\pi^\star) - V(\pihat)
\\
= &~ \E_{d_0 \times \pi^\star}[r] - \E_{d_0 \times \pihat}[r]
\\
= &~ \E_{(x,a,y) \sim d_0 \times \pi^\star}[\psi^\star(y,a)] - \E_{(x,a,y) \sim d_0 \times \pihat}[\psi^\star(y,a)]
\tag{by Assumption~\ref{asmp:realizability}}
\\
= &~ \E_{(x,a,y) \sim d_0 \times \pi^\star}[\psi'_a(y,a)] - \E_{(x,a,y) \sim d_0 \times \pihat}[\psi'_a(y,a)]
\\
&~ + \left| \E_{(x,a,y) \sim d_0 \times \pi^\star}[\psi^\star(y,a) - \psi'_a(y,a)] \right| + \left| \E_{(x,a,y) \sim d_0 \times \pihat}[\psi^\star(y,a) - \psi'_a(y,a)] \right|
\\
\leq &~ \E_{(x,a,y) \sim d_0 \times \pi^\star}[\psi'_a(y,a)] - \E_{(x,a,y) \sim d_0 \times \pihat}[\psi'_a(y,a)]
\\
&~ + (V(\pi^\star) + V(\pihat)) \max_{a \in \Acal} |\psi^\star_{\abar,1} - \psi_{\abar,1}| + (2 - V(\pi^\star) - V(\pihat)) \max_{a \in \Acal} |\psi_{\abar,0} - \psi^\star_{\abar,1}|
\\
\leq &~ \E_{(x,a,y) \sim d_0 \times \pi^\star}[\psi'_a(y,a)] - \E_{(x,a,y) \sim d_0 \times \pi^\star}[\psi'_a(y,a)] + \frac{4 K^2}{\Delta_\Fcal (K - 1)} \varepsilon_{\stat,\abar}
\\
\leq &~ \varepsilon_\CB + \frac{4 K^3}{\Delta_\Fcal (K - 1)} \varepsilon_{\stat,\abar}.
\tag{by the property of $\CB$ oracle in Definition~\ref{def:CB_oracle}}
\end{align}
This completes the proof.
\end{proof}

\section{Experiments on the MNIST dataset}
\label{sec:exp_mnist}

\begin{figure*}[t!]
\begin{subfigure}{0.32\textwidth}
\centering
\includegraphics[width=0.7\textwidth]{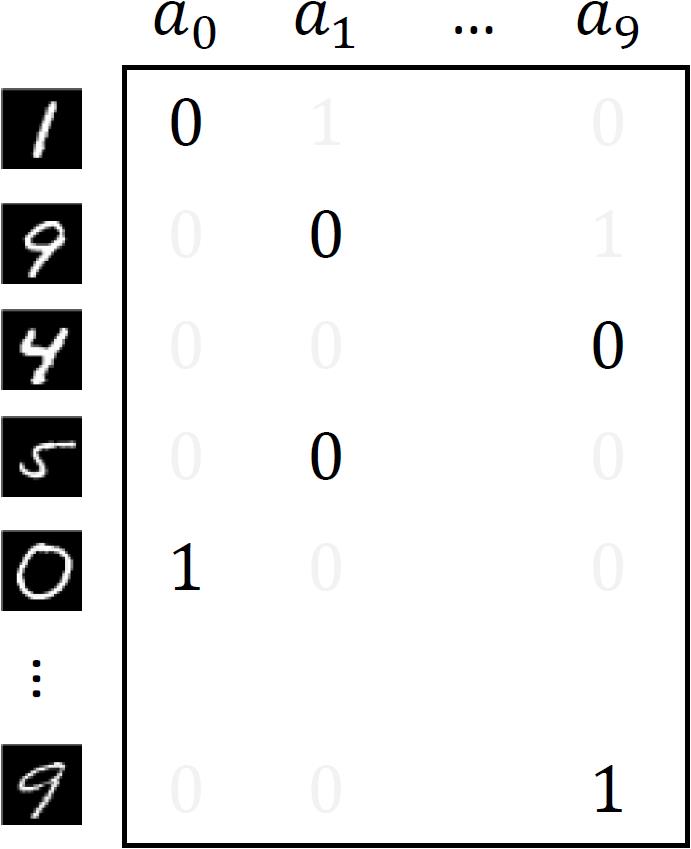}
\caption{Contextual Bandits (CB)}
\label{fig:CB_mnist}
\end{subfigure}
~
\begin{subfigure}{0.32\textwidth}
\centering
\includegraphics[width=.7\textwidth]{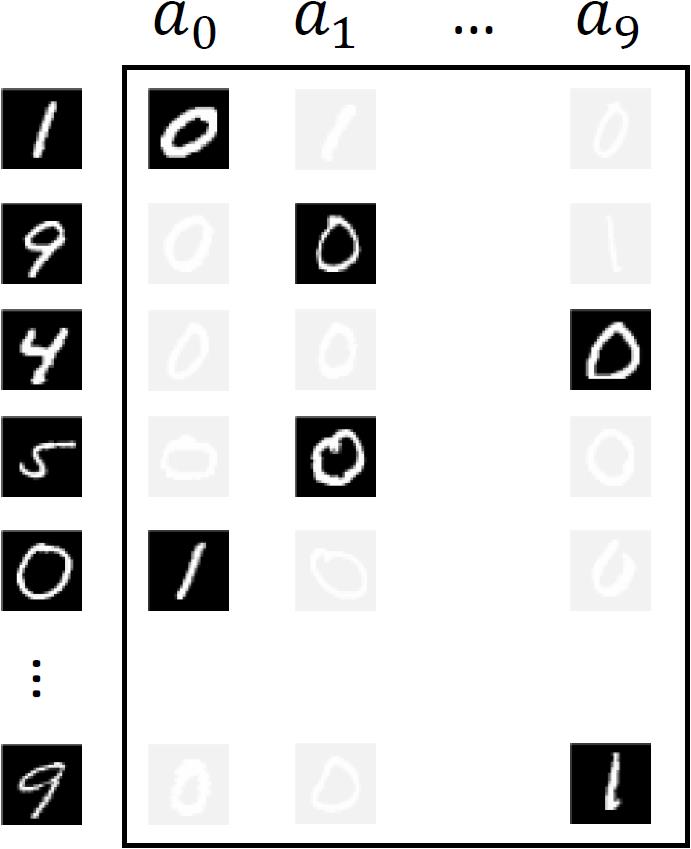}
\caption{\oldIGL~\citep{xie2021interaction}}
\label{fig:old_idl_mnist}
\end{subfigure}
~
\begin{subfigure}{0.32\textwidth}
\centering
\includegraphics[width=.7\textwidth]{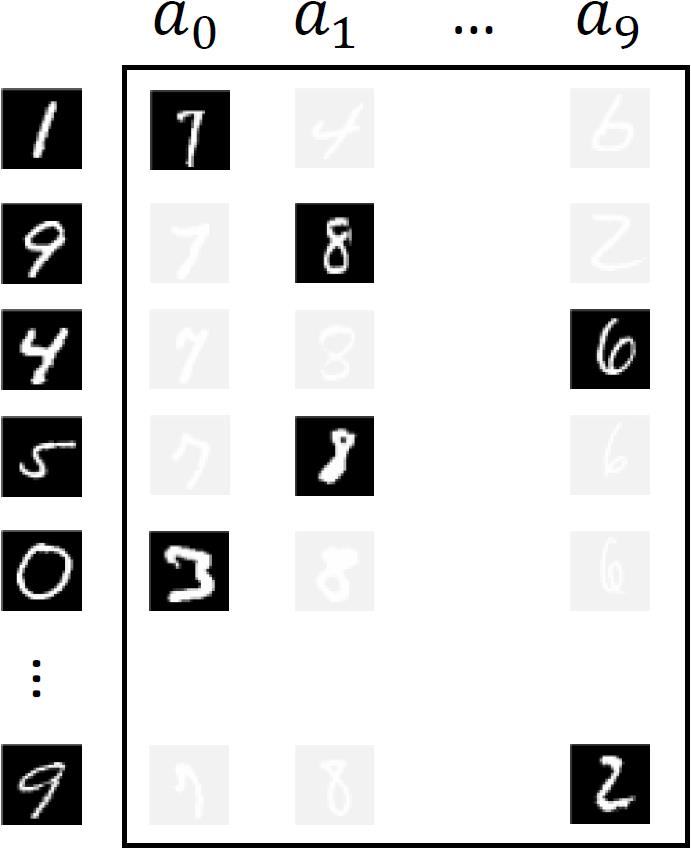}
\caption{\newIGL}
\label{fig:new_idl_mnist}
\end{subfigure}
\caption{Different learning approaches based on the MNIST dataset. The gray number/image denotes the unobserved reward/feedback vector. {\bf Figure~\ref{fig:CB_mnist}:} In the contextual bandits setting, the exact reward information on the selected action can be observed. {\bf Figure~\ref{fig:old_idl_mnist}:} In \oldIGL, the feedback vector is generated only based on the latent reward. {\bf Figure~\ref{fig:new_idl_mnist}:} In \newIGL, the feedback vector can be generated based on both latent reward and selected action.}
\label{fig:mnist_setting}
\end{figure*}

The environment for this experiment is generated from the supervised classification MNIST dataset~\citep{lecun1998gradient} which is licensed under Attribution-Share Alike 3.0 license\footnote{\url{https://creativecommons.org/licenses/by-sa/3.0/}}. At each time step, the context $x_t$ is generated uniformly at random. Then the learner selects an action $a_t \in \{0,\dots,9\}$ as the predicted label of $x_t$. The binary reward $r$ is the correctness of the prediction label $a_t$. The high-dimensional feedback vector $y_t$ is an image of the digit $(a_t + 6r - 3) \mod 10$. An example is shown in Figure~\ref{fig:mnist_setting}. Our results are averaged over 20 trials.

\begin{table*}[!htb]
    \centering
    \begin{tabular}{|c|c|c|}
    \hline
         Algorithm & \begin{tabular}[x]{@{}c@{}}Policy accuracy for\\action-inclusive feedback (\%)\end{tabular}    & \begin{tabular}[x]{@{}c@{}}Policy accuracy for\\action-exclusive feedback (\%)\end{tabular}  \\
        \hline
        CB & $\mathbf{87.64 \pm 0.25}$ & $\mathbf{87.64 \pm 0.25}$ \\
        \hline
        \oldIGL & $9.18 \pm 2.10$ & $86.13 \pm 0.52$ \\
        \hline
        \newIGL & $\mathbf{83.63 \pm 1.25}$ & $\mathbf{86.14 \pm 0.94}$\\
        \hline
    \end{tabular}
    \caption{Results in the MNIST environment with high-dimensional action-inclusive and action-exclusive feedback. Average and standard error reported over 20 trials for each algorithm.
    }
    \label{tab:mnist_action_inclusive}
\end{table*}

To highlight that the proposed algorithm still operates well under conditions where the feedback vector does not include action information, we also perform experiments under the setting introduced by~\citet{xie2021interaction} on the MNIST environment. This setting is similar to the one described in Section~\ref{sec:exp_mnist} except the feedback vector is the image of the digit $r$ instead of $(a_t + 6r - 3) \mod 10$. We find that under this action-exclusive feedback setting, the new proposed algorithm \newIGL works as well as the \oldIGL. This signifies that our algorithm, which incorporates the presence of action information in the feedback vector, does not hurt performance in cases when the action information is missing from the feedback vector.

\section{Additional Experimental Details}
\label{sec:exp_detail}
This section provides additional details on our implementation. The experiments on the MNIST dataset were conducted using a Google Colab, which was based on Intel Xeon CPU (2.30GHz) and 12 GB memory. Large scale experiments on the OpenML datasets were conducted on CPU instances of an internal cluster. No GPU was used. The prototype codes were built over Python, PyTorch\footnote{\url{https://pytorch.org/}}, and Vowpal Wabbit\footnote{\url{https://vowpalwabbit.org/}}. With the single process in the setup above, a single trial of each experiment took less than 30 minutes to finish. Each experiment runs for 10 epochs over the dataset it is being evaluated on. The data is shuffled only once before the training begins so each algorithms views the data in the same order in each epoch. %

The experiment is established based on the Brain Imaging Analysis Kit (brainiak)\footnote{\url{https://github.com/brainiak/brainiak}}, which provides the realistic simulation of functional Magnetic Resonance Imaging (fMRI) based on the real human data~\citep{bejjanki2017noise}. Our detailed setup, including the choice of the most of experiment parameters, follows from the demo provided in brainiak\footnote{\url{https://github.com/brainiak/brainiak/blob/master/examples/utils/fmrisim_multivariate_example.ipynb}}.

\subsection{Results for Individual OpenML datasets}

For each individual OpenML dataset, we report the mean performance and standard errors for the best contact action policy, contextual bandit (CB), IGL assuming conditional independence of feedback on context and action given the latent reward (full CI)~\citep{xie2021interaction}, and IGL with conditional independence of feedback on context given reward (AI-IGL) in Table~\ref{tab:all_openml}. The best constant action policy is computed based on the highest percentage of samples in the entire dataset with the same action. We do not report the standard error for it.
{\scriptsize \sf
\begin{longtable}{c|cccccc}
\caption{Average accuracy and standard errors for different algorithms on individual OpenML datasets.}
\\
\hline
Dataset ID & N & K& \begin{tabular}[x]{@{}c@{}}Best Constant\\Action Policy (\%)\end{tabular} & CB (\%)& IGL (full CI) (\%)& \newIGL (\%)\\
\hline
\endhead
\hline
\endfoot
\label{tab:all_openml}
6     & 20000   & 26   & 4.07  & 53.90$\pm$0.53 & 3.54$\pm$0.18  & 11.40$\pm$1.45   \\
7     & 226     & 24   & 25.22 & 34.35$\pm$2.70 & 5.65$\pm$1.69  & 8.26$\pm$2.26    \\
9     & 205     & 6    & 32.68 & 42.38$\pm$2.56 & 16.90$\pm$2.98 & 20.95$\pm$3.25   \\
11    & 625     & 3    & 46.08 & 85.79$\pm$1.07 & 32.38$\pm$4.33 & 25.00$\pm$6.60   \\
12    & 2000    & 10   & 10.00 & 87.10$\pm$0.79 & 10.17$\pm$1.04 & 32.17$\pm$4.25   \\
14    & 2000    & 10   & 10.00 & 67.75$\pm$0.77 & 10.98$\pm$0.68 & 20.47$\pm$3.15   \\
16    & 2000    & 10   & 10.00 & 87.37$\pm$0.42 & 9.85$\pm$0.73  & 48.52$\pm$4.34   \\
18    & 2000    & 10   & 10.00 & 59.13$\pm$0.94 & 9.25$\pm$0.71  & 26.48$\pm$3.48   \\
20    & 2000    & 10   & 10.00 & 92.25$\pm$0.49 & 9.82$\pm$0.75  & 43.35$\pm$4.35   \\
22    & 2000    & 10   & 10.00 & 71.55$\pm$0.88 & 9.88$\pm$0.81  & 23.85$\pm$3.64   \\
23    & 1473    & 3    & 42.70  & 48.85$\pm$0.92 & 38.61$\pm$1.72 & 31.79$\pm$1.66   \\
26    & 12960   & 5    & 33.33 & 89.86$\pm$0.23 & 19.66$\pm$3.31 & 77.34$\pm$2.83   \\
28    & 5620    & 10   & 10.18 & 93.99$\pm$0.20 & 10.30$\pm$0.75 & 66.49$\pm$4.47   \\
32    & 10992   & 10   & 10.41 & 88.10$\pm$0.22 & 9.62$\pm$0.59  & 74.47$\pm$3.01   \\
35    & 366     & 6    & 30.60 & 95.00$\pm$0.82 & 12.16$\pm$1.62 & 46.62$\pm$6.06   \\
36    & 2310    & 7    & 14.29 & 87.27$\pm$0.64 & 13.35$\pm$1.00 & 63.35$\pm$4.70   \\
39    & 336     & 8    & 42.56 & 71.76$\pm$2.20 & 11.47$\pm$2.69 & 23.53$\pm$4.86   \\
41    & 214     & 6    & 35.51 & 49.55$\pm$2.87 & 14.55$\pm$2.96 & 22.05$\pm$3.68   \\
42    & 683     & 19   & 13.47 & 60.22$\pm$1.89 & 4.57$\pm$0.70  & 20.36$\pm$2.46   \\
48    & 151     & 3    & 34.44 & 42.19$\pm$3.03 & 27.19$\pm$2.09 & 31.25$\pm$2.30   \\
54    & 846     & 4    & 25.77 & 50.12$\pm$1.56 & 23.88$\pm$0.91 & 28.94$\pm$2.19   \\
60    & 5000    & 3    & 33.84 & 86.02$\pm$0.27 & 33.85$\pm$0.48 & 68.82$\pm$4.89   \\
61    & 150     & 3    & 33.33 & 89.00$\pm$1.90 & 33.33$\pm$2.87 & 37.67$\pm$6.67   \\
62    & 101     & 7    & 40.59 & 75.00$\pm$4.40 & 10.91$\pm$3.99 & 23.18$\pm$5.32   \\
74    & 1000000 & 26   & 4.08  & 41.96$\pm$0.30 & 3.87$\pm$0.59  & 33.08$\pm$4.14   \\
75    & 1000000 & 7    & 32.33 & 60.04$\pm$0.09 & 15.91$\pm$3.47 & 58.35$\pm$0.28   \\
78    & 1000000 & 10   & 10.06 & 83.61$\pm$0.05 & 10.07$\pm$0.12 & 80.28$\pm$0.07   \\
115   & 1000000 & 10   & 10.04 & 93.85$\pm$0.03 & 9.81$\pm$0.99  & 92.96$\pm$0.05   \\
116   & 1000000 & 6    & 42.27 & 67.41$\pm$0.06 & 16.20$\pm$1.90 & 66.11$\pm$0.06   \\
117   & 1000000 & 6    & 42.27 & 66.79$\pm$0.08 & 19.84$\pm$5.02 & 65.11$\pm$0.10   \\
118   & 1000000 & 10   & 10.04 & 76.10$\pm$0.07 & 9.58$\pm$0.82  & 72.91$\pm$0.02   \\
119   & 55296   & 3    & 42.78 & 48.76$\pm$0.17 & 33.59$\pm$1.75 & 40.31$\pm$1.91   \\
123   & 1000000 & 10   & 10.17 & 93.02$\pm$0.04 & 11.90$\pm$1.77 & 90.73$\pm$0.02   \\
127   & 1000000 & 10   & 10.46 & 80.56$\pm$0.06 & 9.48$\pm$2.12  & 75.76$\pm$0.14   \\
129   & 1000000 & 6    & 30.46 & 97.47$\pm$0.01 & 11.11$\pm$2.06 & 96.54$\pm$0.01   \\
130   & 1000000 & 7    & 14.36 & 80.83$\pm$0.08 & 14.63$\pm$1.01 & 75.56$\pm$0.27   \\
133   & 137781  & 7    & 35.04 & 59.08$\pm$0.14 & 15.86$\pm$2.34 & 47.74$\pm$3.06   \\
134   & 1000000 & 19   & 13.33 & 88.33$\pm$0.05 & 5.64$\pm$1.05  & 83.17$\pm$0.04   \\
141   & 1000000 & 4    & 25.81 & 58.92$\pm$0.03 & 22.28$\pm$1.56 & 58.59$\pm$0.17   \\
147   & 1000000 & 3    & 33.78 & 83.32$\pm$0.10 & 33.22$\pm$0.14 & 82.46$\pm$0.07   \\
148   & 1000000 & 7    & 39.62 & 93.46$\pm$0.05 & 16.52$\pm$2.19 & 92.25$\pm$0.01   \\
149   & 1455525 & 10   & 44.97 & 61.58$\pm$0.07 & 4.11$\pm$1.93  & 46.16$\pm$6.88   \\
150   & 581012  & 7    & 48.76 & 71.13$\pm$0.06 & 8.42$\pm$2.06  & 51.13$\pm$4.08   \\
154   & 1000000 & 10   & 10.08 & 54.97$\pm$0.10 & 9.88$\pm$0.41  & 53.95$\pm$0.12   \\
156   & 1000000 & 5    & 30.01 & 45.51$\pm$0.25 & 25.14$\pm$2.56 & 42.81$\pm$0.16   \\
157   & 1000000 & 5    & 30.01 & 47.69$\pm$0.19 & 22.62$\pm$1.77 & 46.54$\pm$0.36   \\
158   & 1000000 & 5    & 30.01 & 47.84$\pm$0.16 & 22.23$\pm$2.11 & 46.98$\pm$0.07   \\
159   & 1000000 & 5    & 30.01 & 29.89$\pm$0.09 & 18.77$\pm$1.09 & 12.02$\pm$2.21   \\
160   & 1000000 & 5    & 30.01 & 30.52$\pm$0.06 & 16.17$\pm$1.79 & 13.51$\pm$1.70   \\
163   & 32      & 3    & 40.62 & 38.75$\pm$4.50 & 25.00$\pm$4.33 & 35.00$\pm$5.70   \\
171   & 339     & 21   & 24.78 & 23.82$\pm$2.13 & 3.53$\pm$1.11  & 7.21$\pm$2.14    \\
180   & 110393  & 7    & 46.82 & 64.34$\pm$0.11 & 10.34$\pm$1.86 & 40.44$\pm$3.94   \\
181   & 1484    & 10   & 31.20 & 47.05$\pm$1.22 & 6.31$\pm$1.17  & 21.07$\pm$2.09   \\
182   & 6430    & 6    & 23.81 & 82.20$\pm$0.36 & 18.99$\pm$1.40 & 76.87$\pm$0.88   \\
183   & 4177    & 28   & 16.50 & 19.38$\pm$0.50 & 2.79$\pm$0.73  & 6.60$\pm$1.40    \\
184   & 28056   & 18   & 16.23 & 23.94$\pm$0.32 & 5.71$\pm$0.63  & 6.95$\pm$0.83    \\
187   & 178     & 3    & 39.89 & 93.06$\pm$1.41 & 30.56$\pm$2.65 & 37.50$\pm$6.65   \\
188   & 736     & 5    & 29.08 & 49.59$\pm$1.06 & 18.85$\pm$1.46 & 21.96$\pm$3.81   \\
247   & 1000000 & 26   & 4.08  & 42.91$\pm$0.28 & 3.08$\pm$0.48  & 26.60$\pm$1.03   \\
248   & 1000000 & 7    & 32.36 & 54.00$\pm$0.06 & 8.53$\pm$3.26  & 51.59$\pm$0.33   \\
250   & 1000000 & 10   & 10.05 & 82.52$\pm$0.05 & 9.81$\pm$0.33  & 78.84$\pm$0.10   \\
252   & 1000000 & 10   & 10.04 & 96.43$\pm$0.01 & 9.63$\pm$0.59  & 95.62$\pm$0.04   \\
253   & 1000000 & 6    & 42.31 & 65.27$\pm$0.05 & 12.10$\pm$1.77 & 50.18$\pm$11.17  \\
254   & 1000000 & 10   & 10.03 & 74.63$\pm$0.04 & 11.11$\pm$1.23 & 72.07$\pm$0.09   \\
255   & 55296   & 3    & 42.62 & 52.14$\pm$0.17 & 32.04$\pm$2.04 & 41.94$\pm$1.93   \\
261   & 1000000 & 10   & 10.45 & 77.09$\pm$0.07 & 9.23$\pm$1.58  & 73.85$\pm$0.14   \\
263   & 1000000 & 6    & 30.46 & 97.50$\pm$0.01 & 17.02$\pm$2.75 & 96.54$\pm$0.01   \\
265   & 137781  & 7    & 35.40 & 53.82$\pm$0.14 & 15.45$\pm$2.60 & 38.90$\pm$2.25   \\
268   & 1000000 & 4    & 25.75 & 57.85$\pm$0.11 & 24.47$\pm$0.60 & 57.02$\pm$0.16   \\
271   & 1000000 & 3    & 33.84 & 88.70$\pm$0.04 & 33.14$\pm$0.07 & 87.99$\pm$0.12   \\
272   & 1000000 & 7    & 39.65 & 93.41$\pm$0.05 & 17.01$\pm$3.15 & 91.99$\pm$0.05   \\
285   & 194     & 8    & 30.93 & 31.00$\pm$2.36 & 12.25$\pm$2.25 & 19.75$\pm$2.68   \\
300   & 7797    & 26   & 3.85  & 80.51$\pm$0.46 & 3.44$\pm$0.27  & 30.18$\pm$2.08   \\
307   & 990     & 11   & 9.09  & 32.63$\pm$1.06 & 10.20$\pm$1.05 & 11.67$\pm$1.24   \\
313   & 531     & 48   & 10.36 & 8.52$\pm$1.08  & 1.20$\pm$0.38  & 2.31$\pm$0.63    \\
327   & 105     & 6    & 41.90 & 51.82$\pm$2.41 & 20.45$\pm$3.33 & 20.91$\pm$4.12   \\
328   & 105     & 6    & 41.90 & 50.45$\pm$3.77 & 13.64$\pm$3.55 & 25.45$\pm$3.61   \\
329   & 160     & 3    & 40.62 & 47.50$\pm$2.00 & 29.06$\pm$2.91 & 32.50$\pm$3.94   \\
338   & 155     & 4    & 31.61 & 41.88$\pm$2.03 & 20.62$\pm$2.30 & 21.25$\pm$2.44   \\
339   & 36      & 3    & 33.33 & 78.75$\pm$4.06 & 30.00$\pm$3.35 & 31.25$\pm$4.96   \\
340   & 52      & 3    & 44.23 & 49.17$\pm$5.33 & 31.67$\pm$2.86 & 35.00$\pm$5.26   \\
342   & 52      & 3    & 46.15 & 69.17$\pm$3.78 & 25.00$\pm$4.93 & 37.50$\pm$5.12   \\
372   & 10108   & 46   & 28.47 & 26.67$\pm$0.59 & 1.38$\pm$0.21  & 12.17$\pm$1.05   \\
375   & 9961    & 9    & 16.20 & 90.36$\pm$0.25 & 10.85$\pm$0.89 & 72.55$\pm$3.19   \\
377   & 600     & 6    & 16.67 & 64.58$\pm$2.41 & 12.42$\pm$1.62 & 21.75$\pm$3.10   \\
382   & 7019    & 8    & 27.61 & 36.33$\pm$0.69 & 10.42$\pm$0.72 & 20.98$\pm$1.52   \\
383   & 690     & 10   & 23.19 & 70.94$\pm$1.61 & 10.43$\pm$1.17 & 25.00$\pm$3.73   \\
385   & 927     & 7    & 37.97 & 90.16$\pm$1.16 & 15.48$\pm$2.25 & 27.31$\pm$4.33   \\
386   & 913     & 10   & 17.20 & 56.79$\pm$1.08 & 10.76$\pm$0.99 & 14.67$\pm$2.17   \\
387   & 414     & 9    & 31.88 & 70.48$\pm$2.09 & 7.62$\pm$1.57  & 19.52$\pm$2.89   \\
388   & 204     & 6    & 44.61 & 67.62$\pm$2.93 & 18.10$\pm$3.13 & 29.52$\pm$5.76   \\
389   & 2463    & 17   & 20.54 & 65.22$\pm$0.75 & 6.09$\pm$0.90  & 24.64$\pm$2.33   \\
390   & 9558    & 44   & 7.28  & 47.65$\pm$0.47 & 2.25$\pm$0.11  & 15.37$\pm$0.97   \\
391   & 1504    & 13   & 40.43 & 61.95$\pm$1.10 & 4.64$\pm$0.97  & 19.74$\pm$3.08   \\
392   & 1003    & 10   & 19.34 & 66.34$\pm$1.14 & 8.61$\pm$0.58  & 21.78$\pm$2.74   \\
393   & 3075    & 6    & 29.43 & 83.07$\pm$0.49 & 15.83$\pm$1.36 & 37.74$\pm$4.94   \\
394   & 918     & 10   & 16.23 & 61.74$\pm$1.13 & 9.89$\pm$0.89  & 14.78$\pm$2.23   \\
395   & 1657    & 25   & 22.39 & 47.74$\pm$0.83 & 2.35$\pm$0.31  & 13.40$\pm$2.00   \\
396   & 3204    & 6    & 29.43 & 81.59$\pm$0.49 & 15.67$\pm$1.49 & 35.42$\pm$4.69   \\
397   & 313     & 8    & 29.71 & 61.72$\pm$1.70 & 14.69$\pm$1.98 & 25.94$\pm$3.34   \\
398   & 1560    & 20   & 21.86 & 55.80$\pm$1.31 & 3.27$\pm$0.58  & 18.17$\pm$2.81   \\
399   & 11162   & 10   & 14.52 & 65.26$\pm$0.30 & 10.44$\pm$0.39 & 33.83$\pm$2.58   \\
400   & 878     & 10   & 27.68 & 79.43$\pm$1.16 & 7.78$\pm$1.03  & 23.41$\pm$4.39   \\
401   & 1050    & 10   & 15.71 & 59.90$\pm$1.03 & 9.05$\pm$0.86  & 15.52$\pm$2.27   \\
452   & 285     & 7    & 41.40 & 35.69$\pm$2.32 & 16.38$\pm$2.70 & 18.10$\pm$2.42   \\
457   & 27      & 4    & 44.44 & 51.67$\pm$6.86 & 30.00$\pm$7.42 & 23.33$\pm$5.82   \\
458   & 841     & 4    & 37.69 & 99.59$\pm$0.15 & 23.35$\pm$2.67 & 59.82$\pm$8.04   \\
460   & 379     & 4    & 37.20 & 50.53$\pm$1.64 & 21.71$\pm$1.74 & 30.13$\pm$2.37   \\
468   & 72      & 6    & 16.67 & 48.75$\pm$4.83 & 20.62$\pm$3.22 & 18.12$\pm$4.28   \\
469   & 797     & 6    & 19.45 & 19.31$\pm$0.91 & 16.75$\pm$1.01 & 16.88$\pm$1.07   \\
473   & 2796    & 6    & 24.32 & 63.55$\pm$1.21 & 15.73$\pm$1.30 & 24.27$\pm$3.77   \\
475   & 400     & 4    & 25.00 & 33.25$\pm$1.67 & 24.38$\pm$1.73 & 26.38$\pm$2.12   \\
554   & 70000   & 10   & 11.25 & 89.92$\pm$0.10 & 9.98$\pm$0.29  & 84.17$\pm$0.12   \\
679   & 1024    & 4    & 39.45 & 40.24$\pm$1.66 & 27.67$\pm$2.36 & 23.98$\pm$3.03   \\
685   & 130     & 5    & 20.00 & 33.85$\pm$3.00 & 20.38$\pm$2.68 & 20.38$\pm$3.14   \\
694   & 310     & 9    & 13.23 & 28.23$\pm$1.89 & 8.71$\pm$1.71  & 10.81$\pm$2.25   \\
952   & 214     & 6    & 35.51 & 50.91$\pm$2.66 & 16.14$\pm$3.36 & 20.91$\pm$3.19   \\
1041  & 3468    & 10   & 11.04 & 82.03$\pm$0.63 & 9.60$\pm$0.57  & 43.08$\pm$3.18   \\
1044  & 10936   & 3    & 38.97 & 50.41$\pm$0.37 & 32.98$\pm$1.29 & 43.37$\pm$1.46   \\
1079  & 95      & 5    & 28.42 & 46.50$\pm$5.49 & 17.50$\pm$2.90 & 16.50$\pm$3.62   \\
1080  & 113     & 5    & 45.13 & 32.50$\pm$2.88 & 15.00$\pm$3.36 & 17.50$\pm$2.42   \\
1081  & 89      & 4    & 48.31 & 38.33$\pm$3.89 & 23.33$\pm$2.93 & 23.89$\pm$4.03   \\
1083  & 214     & 7    & 32.24 & 28.41$\pm$3.20 & 14.32$\pm$2.52 & 19.32$\pm$3.08   \\
1088  & 383     & 9    & 40.47 & 47.56$\pm$3.16 & 10.00$\pm$1.88 & 12.56$\pm$2.23   \\
1102  & 96      & 9    & 47.92 & 60.50$\pm$3.35 & 17.50$\pm$4.29 & 20.00$\pm$4.06   \\
1106  & 190     & 14   & 15.79 & 22.63$\pm$2.91 & 7.89$\pm$1.21  & 6.05$\pm$1.36    \\
1109  & 96      & 11   & 23.96 & 43.50$\pm$4.08 & 11.00$\pm$1.99 & 19.00$\pm$4.06   \\
1115  & 151     & 3    & 34.44 & 47.19$\pm$2.71 & 28.12$\pm$2.14 & 29.69$\pm$1.82   \\
1177  & 1000000 & 22   & 24.07 & 49.84$\pm$0.16 & 6.52$\pm$2.14  & 41.14$\pm$0.99   \\
1183  & 1000000 & 6    & 23.84 & 80.99$\pm$0.13 & 23.14$\pm$3.79 & 76.32$\pm$0.21   \\
1185  & 1000000 & 3    & 40.11 & 92.69$\pm$0.04 & 33.58$\pm$2.51 & 92.60$\pm$0.02   \\
1186  & 1000000 & 5    & 28.98 & 46.57$\pm$0.17 & 17.83$\pm$2.19 & 46.59$\pm$0.12   \\
1209  & 1000000 & 11   & 9.14  & 30.53$\pm$0.22 & 9.14$\pm$0.55  & 27.91$\pm$1.06   \\
1214  & 1000000 & 9    & 16.08 & 77.46$\pm$0.09 & 8.53$\pm$0.90  & 74.37$\pm$0.35   \\
1233  & 945     & 7    & 14.81 & 35.58$\pm$1.48 & 14.42$\pm$0.65 & 16.11$\pm$1.30   \\
1378  & 1000000 & 26   & 4.08  & 34.15$\pm$0.36 & 3.98$\pm$0.44  & 20.64$\pm$4.44   \\
1379  & 1000000 & 26   & 4.08  & 20.26$\pm$0.57 & 3.64$\pm$0.19  & 5.09$\pm$0.38    \\
1380  & 1000000 & 26   & 4.08  & 13.97$\pm$0.46 & 3.79$\pm$0.16  & 4.01$\pm$1.18    \\
1381  & 1000000 & 26   & 4.08  & 39.52$\pm$0.34 & 3.85$\pm$0.40  & 26.77$\pm$6.91   \\
1382  & 1000000 & 26   & 4.08  & 34.60$\pm$0.34 & 4.02$\pm$0.43  & 26.70$\pm$2.34   \\
1383  & 1000000 & 26   & 4.08  & 29.49$\pm$0.20 & 4.08$\pm$0.28  & 16.45$\pm$5.71   \\
1384  & 1000000 & 26   & 4.08  & 41.35$\pm$0.20 & 3.86$\pm$0.14  & 22.29$\pm$6.30   \\
1385  & 1000000 & 26   & 4.08  & 37.62$\pm$0.14 & 3.92$\pm$0.50  & 21.30$\pm$5.19   \\
1386  & 1000000 & 26   & 4.08  & 33.98$\pm$0.28 & 4.43$\pm$0.22  & 30.04$\pm$0.80   \\
1387  & 1000000 & 24   & 24.14 & 74.78$\pm$0.05 & 2.85$\pm$1.01  & 66.79$\pm$0.07   \\
1388  & 1000000 & 24   & 24.14 & 55.58$\pm$0.05 & 4.55$\pm$0.95  & 49.54$\pm$0.88   \\
1389  & 1000000 & 24   & 24.14 & 44.84$\pm$0.11 & 3.43$\pm$0.50  & 34.55$\pm$1.30   \\
1390  & 1000000 & 24   & 24.14 & 77.74$\pm$0.11 & 1.94$\pm$0.42  & 71.71$\pm$0.14   \\
1391  & 1000000 & 24   & 24.14 & 74.80$\pm$0.05 & 2.80$\pm$0.79  & 67.01$\pm$0.31   \\
1392  & 1000000 & 24   & 24.14 & 69.23$\pm$0.07 & 6.82$\pm$2.16  & 62.90$\pm$0.54   \\
1393  & 1000000 & 7    & 32.36 & 46.65$\pm$0.12 & 20.86$\pm$4.63 & 36.27$\pm$1.42   \\
1394  & 1000000 & 7    & 32.36 & 35.02$\pm$0.05 & 15.67$\pm$3.38 & 22.66$\pm$2.35   \\
1395  & 1000000 & 7    & 32.36 & 33.36$\pm$0.07 & 7.12$\pm$2.60  & 4.77$\pm$2.63    \\
1396  & 1000000 & 7    & 32.36 & 55.51$\pm$0.20 & 17.11$\pm$2.66 & 53.13$\pm$0.18   \\
1397  & 1000000 & 7    & 32.36 & 46.56$\pm$0.13 & 14.76$\pm$4.15 & 37.56$\pm$1.30   \\
1398  & 1000000 & 7    & 32.36 & 37.47$\pm$0.10 & 8.33$\pm$2.70  & 21.45$\pm$1.93   \\
1399  & 1000000 & 7    & 32.36 & 53.28$\pm$0.09 & 12.66$\pm$2.06 & 49.73$\pm$0.46   \\
1400  & 549796  & 7    & 32.43 & 53.06$\pm$0.05 & 20.55$\pm$2.25 & 46.79$\pm$0.41   \\
1401  & 1000000 & 7    & 32.36 & 46.72$\pm$0.11 & 11.69$\pm$2.61 & 37.31$\pm$0.35   \\
1413  & 150     & 3    & 33.33 & 88.67$\pm$2.63 & 31.00$\pm$2.22 & 35.33$\pm$7.57   \\
1457  & 1500    & 50   & 2.00  & 11.60$\pm$0.64 & 1.90$\pm$0.32  & 2.30$\pm$0.31    \\
1459  & 10218   & 10   & 13.86 & 32.69$\pm$0.39 & 8.52$\pm$0.57  & 9.77$\pm$1.22    \\
1465  & 106     & 6    & 20.75 & 24.55$\pm$3.02 & 13.64$\pm$1.76 & 14.55$\pm$2.60   \\
1466  & 2126    & 10   & 27.23 & 99.77$\pm$0.08 & 7.89$\pm$1.31  & 62.93$\pm$5.20   \\
1468  & 1080    & 9    & 11.11 & 87.41$\pm$0.84 & 10.32$\pm$0.85 & 41.94$\pm$3.52   \\
1472  & 768     & 37   & 9.64  & 13.90$\pm$1.14 & 3.83$\pm$0.73  & 2.73$\pm$0.78    \\
1475  & 6118    & 6    & 41.75 & 46.27$\pm$0.45 & 17.03$\pm$1.63 & 24.70$\pm$2.06   \\
1476  & 13910   & 6    & 21.63 & 94.80$\pm$0.25 & 17.51$\pm$1.00 & 87.76$\pm$1.11   \\
1477  & 13910   & 6    & 21.63 & 96.46$\pm$0.18 & 15.03$\pm$1.34 & 84.52$\pm$1.91   \\
1478  & 10299   & 6    & 18.88 & 95.02$\pm$0.29 & 17.29$\pm$1.27 & 91.06$\pm$2.23   \\
1481  & 28056   & 18   & 16.23 & 23.34$\pm$0.31 & 5.33$\pm$0.61  & 7.39$\pm$1.05    \\
1482  & 340     & 30   & 4.71  & 10.59$\pm$1.05 & 3.24$\pm$0.65  & 3.24$\pm$0.62    \\
1483  & 164860  & 11   & 33.05 & 38.96$\pm$0.18 & 7.54$\pm$1.30  & 17.07$\pm$1.56   \\
1491  & 1600    & 100  & 1.00  & 6.81$\pm$0.50  & 0.88$\pm$0.14  & 1.66$\pm$0.24    \\
1492  & 1600    & 100  & 1.00  & 2.47$\pm$0.30  & 0.75$\pm$0.18  & 1.06$\pm$0.24    \\
1493  & 1599    & 100  & 1.00  & 6.31$\pm$0.36  & 0.88$\pm$0.17  & 1.28$\pm$0.24    \\
1497  & 5456    & 4    & 40.41 & 66.40$\pm$0.52 & 24.53$\pm$3.06 & 44.01$\pm$4.83   \\
1499  & 210     & 3    & 33.33 & 70.95$\pm$3.87 & 32.62$\pm$1.66 & 25.95$\pm$4.74   \\
1500  & 210     & 3    & 33.33 & 75.71$\pm$2.84 & 32.86$\pm$1.90 & 27.86$\pm$4.63   \\
1501  & 1593    & 10   & 10.17 & 80.28$\pm$0.62 & 9.12$\pm$0.63  & 30.31$\pm$3.21   \\
1503  & 263256  & 10   & 10.06 & 10.13$\pm$0.03 & 10.03$\pm$0.05 & 10.03$\pm$0.05   \\
1508  & 403     & 5    & 32.01 & 67.68$\pm$1.44 & 19.39$\pm$2.34 & 24.63$\pm$3.92   \\
1509  & 149332  & 22   & 14.73 & 23.68$\pm$0.41 & 5.07$\pm$0.67  & 7.76$\pm$0.43    \\
1512  & 200     & 5    & 28.00 & 25.00$\pm$1.66 & 20.75$\pm$2.07 & 21.00$\pm$1.82   \\
1513  & 123     & 5    & 39.02 & 31.54$\pm$3.12 & 19.23$\pm$3.24 & 26.92$\pm$3.46   \\
1514  & 360     & 10   & 10.00 & 65.56$\pm$2.37 & 8.33$\pm$1.08  & 24.72$\pm$3.01   \\
1515  & 571     & 20   & 10.51 & 18.28$\pm$1.29 & 6.29$\pm$0.73  & 7.07$\pm$1.21    \\
1516  & 88      & 4    & 38.64 & 58.89$\pm$4.24 & 21.11$\pm$3.13 & 16.11$\pm$3.47   \\
1517  & 47      & 5    & 42.55 & 31.00$\pm$4.12 & 17.00$\pm$3.54 & 18.00$\pm$5.08   \\
1518  & 47      & 4    & 42.55 & 39.00$\pm$4.79 & 13.00$\pm$2.56 & 21.00$\pm$4.58   \\
1520  & 164     & 5    & 28.66 & 42.94$\pm$2.28 & 19.41$\pm$2.73 & 22.35$\pm$4.27   \\
1523  & 215     & 3    & 46.05 & 82.95$\pm$1.84 & 32.95$\pm$2.18 & 37.50$\pm$7.14   \\
1525  & 5456    & 4    & 40.41 & 74.57$\pm$0.56 & 23.72$\pm$3.21 & 39.73$\pm$4.93   \\
1548  & 2500    & 3    & 46.92 & 47.08$\pm$0.86 & 35.46$\pm$3.66 & 36.28$\pm$1.41   \\
1549  & 742     & 8    & 22.10 & 18.20$\pm$0.99 & 11.87$\pm$0.98 & 12.80$\pm$1.31   \\
1551  & 400     & 8    & 27.75 & 16.50$\pm$1.64 & 14.12$\pm$1.72 & 12.50$\pm$1.26   \\
1552  & 1100    & 5    & 27.73 & 29.23$\pm$0.89 & 19.86$\pm$1.63 & 21.86$\pm$1.69   \\
1553  & 700     & 3    & 35.00 & 38.71$\pm$1.18 & 30.43$\pm$1.06 & 34.93$\pm$1.16   \\
1554  & 500     & 5    & 38.40 & 32.20$\pm$1.31 & 20.40$\pm$2.49 & 21.00$\pm$2.17   \\
1555  & 976     & 8    & 23.57 & 15.26$\pm$1.09 & 11.43$\pm$0.79 & 12.60$\pm$0.83   \\
1568  & 12958   & 4    & 33.34 & 89.77$\pm$0.25 & 23.83$\pm$2.77 & 83.15$\pm$1.89   \\
1596  & 580943  & 7    & 48.76 & 69.45$\pm$0.07 & 18.54$\pm$3.50 & 36.97$\pm$2.49   \\
4552  & 5665    & 102  & 8.88  & 51.48$\pm$1.22 & 0.39$\pm$0.12  & 15.04$\pm$1.60   \\
40927 & 60000   & 10   & 10.00 & 30.01$\pm$0.69 & 9.98$\pm$0.17  & 13.24$\pm$1.30   \\
40966 & 1080    & 8    & 13.89 & 47.27$\pm$2.21 & 13.52$\pm$1.14 & 22.18$\pm$2.34   \\
40971 & 1000    & 30   & 8.00  & 10.40$\pm$0.71 & 4.45$\pm$0.63  & 4.75$\pm$0.83    \\
40979 & 2000    & 10   & 10.00 & 92.40$\pm$0.46 & 10.32$\pm$0.97 & 38.15$\pm$3.68   \\
40982 & 1941    & 7    & 34.67 & 64.08$\pm$0.61 & 13.31$\pm$1.97 & 31.62$\pm$3.78   \\
40984 & 2310    & 7    & 14.29 & 79.94$\pm$0.81 & 16.15$\pm$1.25 & 50.71$\pm$4.97   \\
40985 & 45781   & 20   & 6.35  & 6.28$\pm$0.13  & 5.05$\pm$0.22  & 5.33$\pm$0.33    \\
40996 & 70000   & 10   & 10.00 & 83.51$\pm$0.14 & 9.61$\pm$0.45  & 80.84$\pm$0.16   \\
41002 & 5880    & 3    & 44.47 & 97.57$\pm$0.16 & 32.84$\pm$1.91 & 83.48$\pm$5.57   \\
41003 & 5880    & 3    & 49.61 & 87.84$\pm$0.40 & 34.97$\pm$2.99 & 65.33$\pm$6.96   \\
41004 & 4704    & 3    & 44.20 & 97.44$\pm$0.18 & 31.80$\pm$1.46 & 92.30$\pm$3.63   \\
41039 & 131600  & 47   & 2.13  & 51.52$\pm$0.18 & 2.25$\pm$0.08  & 16.85$\pm$0.86   \\
41081 & 44557   & 10   & 18.82 & 15.47$\pm$0.80 & 8.84$\pm$0.44  & 10.35$\pm$0.38   \\
41082 & 9298    & 10   & 16.7  & 91.12$\pm$0.20 & 11.21$\pm$0.82 & 77.66$\pm$2.29   \\
41083 & 400     & 40   & 2.50  & 4.88$\pm$1.33  & 2.88$\pm$0.69  & 2.00$\pm$0.45    \\
41084 & 575     & 20   & 8.35  & 29.91$\pm$1.41 & 4.14$\pm$0.59  & 6.29$\pm$0.59    \\
41163 & 10000   & 5    & 20.49 & 87.03$\pm$0.45 & 19.44$\pm$0.55 & 66.98$\pm$4.69   \\
41164 & 8237    & 7    & 23.39 & 54.04$\pm$0.37 & 14.51$\pm$0.52 & 43.50$\pm$2.91   \\
41165 & 10000   & 10   & 10.43 & 27.56$\pm$0.45 & 10.08$\pm$0.32 & 10.49$\pm$0.42   \\
41166 & 58310   & 10   & 21.96 & 53.11$\pm$0.44 & 11.71$\pm$1.12 & 45.36$\pm$1.32   \\
41167 & 416188  & 355  & 0.59  & 28.28$\pm$0.26 & 0.27$\pm$0.02  & 2.06$\pm$0.17    \\
41168 & 83733   & 4    & 46.01 & 62.84$\pm$0.15 & 24.93$\pm$2.63 & 48.94$\pm$3.76   \\
41169 & 65196   & 100  & 6.14  & 19.49$\pm$0.24 & 0.65$\pm$0.05  & 7.34$\pm$0.61    \\
41511 & 150     & 3    & 33.33 & 84.67$\pm$3.09 & 34.33$\pm$2.59 & 52.00$\pm$7.14   \\
41568 & 150     & 3    & 33.33 & 85.00$\pm$2.98 & 32.33$\pm$1.90 & 44.67$\pm$7.59   \\
41583 & 150     & 3    & 33.33 & 82.67$\pm$3.00 & 36.67$\pm$1.91 & 39.67$\pm$8.47   \\
41919 & 527     & 4    & 39.47 & 50.57$\pm$1.34 & 23.40$\pm$1.99 & 31.51$\pm$4.68   \\
41939 & 218     & 5    & 38.53 & 35.23$\pm$2.31 & 23.64$\pm$3.21 & 23.64$\pm$2.22   \\
41950 & 150     & 3    & 33.33 & 80.00$\pm$2.67 & 32.33$\pm$1.84 & 25.67$\pm$6.62   \\
41960 & 523590  & 144  & 25.08 & 38.46$\pm$0.79 & 0.56$\pm$0.14  & 15.89$\pm$2.20   \\
41972 & 9144    & 8    & 44.29 & 63.16$\pm$1.64 & 12.44$\pm$3.11 & 26.07$\pm$4.36   \\
41981 & 296     & 14   & 30.74 & 26.33$\pm$1.65 & 4.83$\pm$1.09  & 12.00$\pm$1.77   \\
41982 & 70000   & 10   & 10.00 & 75.86$\pm$0.12 & 10.11$\pm$0.25 & 70.14$\pm$0.14   \\
41986 & 51839   & 43   & 5.79  & 87.55$\pm$0.17 & 3.23$\pm$0.29  & 50.86$\pm$1.25   \\
41988 & 51839   & 43   & 5.79  & 89.76$\pm$0.14 & 2.97$\pm$0.31  & 56.77$\pm$1.73   \\
41989 & 51839   & 43   & 5.79  & 88.43$\pm$0.17 & 1.61$\pm$0.16  & 45.18$\pm$1.15   \\
41990 & 51839   & 43   & 5.79  & 15.89$\pm$0.13 & 1.54$\pm$0.16  & 8.34$\pm$0.42    \\
41991 & 270912  & 49   & 2.58  & 53.96$\pm$0.13 & 2.17$\pm$0.08  & 26.62$\pm$0.65   \\
41997 & 150     & 3    & 33.33 & 89.00$\pm$2.07 & 30.00$\pm$1.73 & 42.33$\pm$8.44   \\
42003 & 150     & 3    & 33.33 & 82.67$\pm$3.00 & 34.67$\pm$2.19 & 16.67$\pm$5.30   \\
42011 & 150     & 3    & 33.33 & 81.67$\pm$2.90 & 33.67$\pm$3.18 & 27.33$\pm$7.87   \\
42016 & 150     & 3    & 33.33 & 83.67$\pm$2.96 & 32.00$\pm$2.39 & 32.67$\pm$7.66   \\
42021 & 150     & 3    & 33.33 & 82.00$\pm$3.17 & 36.00$\pm$3.14 & 37.00$\pm$6.77   \\
42026 & 150     & 3    & 33.33 & 83.00$\pm$3.28 & 31.33$\pm$2.27 & 38.33$\pm$7.01   \\
42031 & 150     & 3    & 33.33 & 84.67$\pm$2.83 & 33.67$\pm$2.69 & 27.33$\pm$6.53   \\
42036 & 150     & 3    & 33.33 & 85.67$\pm$2.68 & 37.00$\pm$2.23 & 44.67$\pm$7.57   \\
42041 & 150     & 3    & 33.33 & 83.33$\pm$2.81 & 29.33$\pm$2.08 & 41.33$\pm$8.06   \\
42046 & 150     & 3    & 33.33 & 86.67$\pm$2.58 & 32.67$\pm$3.09 & 30.33$\pm$6.90   \\
42051 & 150     & 3    & 33.33 & 82.67$\pm$3.03 & 30.00$\pm$2.33 & 29.33$\pm$7.19   \\
42056 & 150     & 3    & 33.33 & 80.33$\pm$3.35 & 33.00$\pm$2.52 & 28.67$\pm$7.29   \\
42066 & 150     & 3    & 33.33 & 82.00$\pm$3.50 & 31.67$\pm$2.20 & 40.00$\pm$5.85   \\
42071 & 150     & 3    & 33.33 & 83.33$\pm$2.56 & 29.67$\pm$2.60 & 43.67$\pm$6.95   \\
42098 & 150     & 3    & 33.33 & 84.00$\pm$2.92 & 29.67$\pm$2.33 & 37.00$\pm$7.31   \\
42140 & 9927    & 10   & 19.10 & 13.87$\pm$0.77 & 9.21$\pm$0.80  & 10.69$\pm$0.54   \\
42141 & 49644   & 10   & 19.10 & 15.84$\pm$0.86 & 11.59$\pm$0.89 & 11.03$\pm$0.54   \\
42186 & 150     & 3    & 33.33 & 82.33$\pm$2.92 & 32.67$\pm$2.26 & 28.00$\pm$7.27   \\
42261 & 150     & 3    & 33.33 & 81.67$\pm$3.30 & 33.33$\pm$2.94 & 24.00$\pm$5.88   \\
42345 & 70340   & 3    & 48.88 & 61.16$\pm$0.14 & 22.11$\pm$4.71 & 45.74$\pm$1.31   \\
42396 & 108000  & 1000 & 0.10  & 4.78$\pm$0.09  & 0.11$\pm$0.01  & 0.42$\pm$0.03    \\
42468 & 830000  & 5    & 20.22 & 68.18$\pm$0.14 & 21.15$\pm$1.19 & 60.11$\pm$0.39   \\
42532 & 2778    & 10   & 27.29 & 45.90$\pm$0.91 & 9.95$\pm$0.78  & 21.89$\pm$2.99   \\
42544 & 265     & 8    & 17.74 & 42.04$\pm$1.68 & 14.81$\pm$1.76 & 15.00$\pm$2.31   \\
42585 & 344     & 3    & 44.19 & 75.14$\pm$1.62 & 39.43$\pm$2.45 & 26.57$\pm$6.01   \\
42700 & 150     & 3    & 33.33 & 87.33$\pm$2.70 & 33.67$\pm$1.53 & 22.00$\pm$6.15   \\
42718 & 1000000 & 4    & 25.75 & 57.78$\pm$0.10 & 24.35$\pm$1.67 & 57.13$\pm$0.07   \\
42793 & 75      & 4    & 40.00 & 71.25$\pm$3.07 & 23.12$\pm$3.22 & 34.38$\pm$5.44   \\
43859 & 150     & 3    & 33.33 & 84.33$\pm$2.99 & 30.67$\pm$2.47 & 32.33$\pm$7.25   \\
43875 & 150     & 3    & 33.33 & 85.33$\pm$2.34 & 32.00$\pm$2.34 & 42.33$\pm$7.34  
\end{longtable}
} 

\subsection{Ablation Analysis on OpenML Datasets}

This section provides an ablation study based on the results of OpenML datasets in Table~\ref{tab:openml_action_exclusive} and Figure~\ref{fig:exp_openml_balanced}.
Both Table~\ref{tab:openml_action_exclusive} and Figure~\ref{fig:exp_openml_balanced} compares the performance of \newIGL, CB, and \oldIGL on all OpenML datasets with balanced action distributions, and that with large sample size (sample size $\geq$ MNIST). We can observe that \newIGL almost always outperforms \oldIGL. 
We now demonstrate that the performance of \newIGL is affected by: 1) the label noise; 2) the size of datasets (expected from theory), by the following observation.

At first, it is easy to notice that the performance of \newIGL is significantly benefited from the sample size, by comparing with the performance of \newIGL between all datasets and datasets with a large sample size (also suggested in Table~\ref{tab:openml_action_exclusive}), which follows the prediction of Theorem~\ref{thm:main_thm}. 
On the other hand, we study how the label noise affects the performance of \newIGL. To ablate the effect from the sample size, we only study the results in the dataset with large sample size. In this case, we consider the performance of CB to indicate the label noise, i.e., datasets with high CB performance means small label noise, and vice versa.
We can observe from Figure~\ref{fig:openml_large_N} that the performance gap between CB and \newIGL increases considerably with the label noise increasing (i.e., CB performance decreasing). This suggests that \newIGL is more sensitive to the label noise compared with CB. It is unclear if this is an information-theoretical difficulty due to the setting without explicit reward, and we leave further investigation along this line as future work.

\begin{figure*}
\begin{subfigure}{0.5\textwidth}
\centering
\includegraphics[width=.99\textwidth]{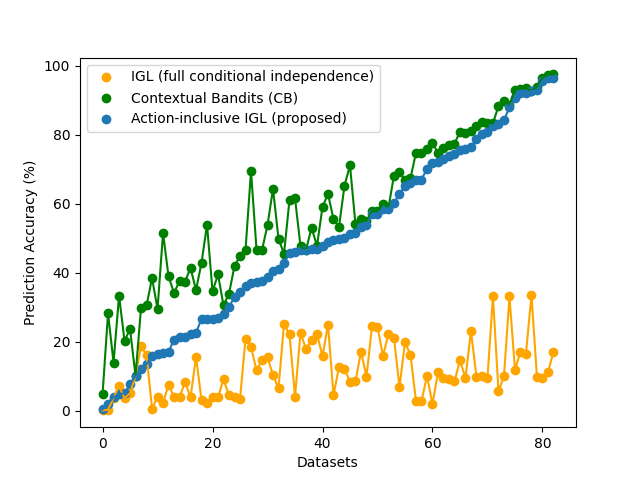}
\caption{Datasets with $K\geq3$, $N\geq70000$}
\label{fig:openml_large_N}
\end{subfigure}
~
\begin{subfigure}{0.5\textwidth}
\centering
\includegraphics[width=.99\textwidth]{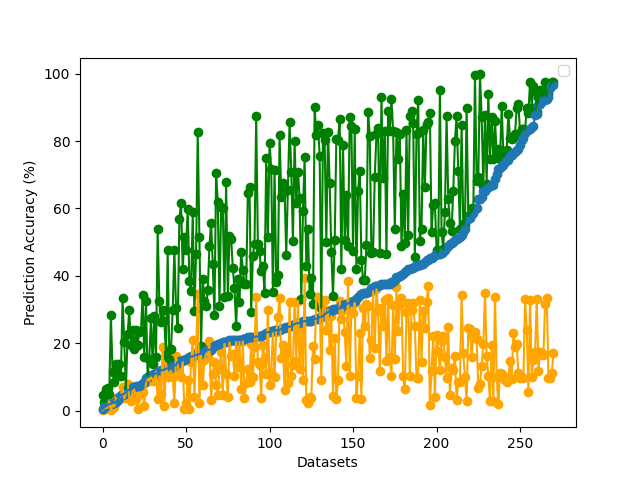}
\caption{Datasets with $K\geq3$, no constraints on $N$}
\label{fig:openml_all}
\end{subfigure}
\caption{Average performance on datasets with balanced action distributions from the OpenML benchmarking suite. For legibility of the figure, we do not include standard errors here. Standard errors for individual datasets are reported in Table~\ref{tab:all_openml}. $K$ is the size of the action set and $N$ is the sample size.}
\label{fig:exp_openml_balanced}
\end{figure*}

\subsection{Analyzing OpenML Dataset Meta-Properties with respect to AI-IGL's Performance}
\label{sec:feat_analysis}

In order to better understand which features of a dataset make them amenable to high IGL performance, we systematically analyzed the features of OpenML datasets. Specifically, we measured a total of 15
features/meta-properties for all OpenML datasets and compared the accuracy of AI-IGL's success relative to CB. We first explain the first 12, and explanation for the additional 3 measures follow.

We collected a total of 49 features following the findings of~\citet{lorena2019complex,torra2008modeling,reif2014automatic,abdelmessih2010landmarking}. Based on preliminary analysis. We narrowed these features down to the following 12 features for further investigation of AI-IGL on OpenML datasets, because they had the least correlations with one another:
\begin{itemize}
    \item Accuracy of the 1-nearest neighbor classifier on the dataset (1nn\_accuracy)~\citep{reif2014automatic,abdelmessih2010landmarking}
    \item Best single decision node accuracy created using the feature attribute with the highest information gain (best\_node\_accuracy)~\citep{reif2014automatic,abdelmessih2010landmarking}
    \item The number of 0/1 features when the dataset is one-hot encoded (feature\_onehot\_count)~\citep{lorena2019complex}
    \item Ratio of sample dimensionality by sample count (instance\_per\_feature)~\citep{lorena2019complex} 
    \item The ratio of the class distribution entropy and the maximum entropy for the uniform distribution over classes (class\_entropy\_N)
    \item Maximum Fisher discriminant ratio %
    (max\_fisher\_discrim)~\citep{lorena2019complex}
    \item The percentage of values in the feature matrix that are non-zero (max\_single\_feature\_eff)~\citep{lorena2019complex}
    \item Mutual information mean (mutual\_XY\_info\_mean)~\citep{torra2008modeling,reif2014automatic}
    \item Naive Bayes accuracy (naive\_bayes\_accuracy)~\citep{reif2014automatic,abdelmessih2010landmarking}
    \item Noise to signal ratio (noise\_signal\_ratio)~\citep{torra2008modeling}
    \item %
    Number of principal components needed to represent 95\% of data variability (pca\_dims\_95)~\citep{lorena2019complex}
    \item Percent of data variance explained by the top principal component (pca\_top\_1\_percent)~\citep{lorena2019complex}
\end{itemize}

In addition to these 12 features, we added three additional features as follows. Compared to the typical CB guarantee, \newIGL needs one more $K$ factor in its theoretical guarantees (Theorem~\ref{thm:main_thm}). Since $K$ factors can be improved under some specific choice of function class (see discussion in Section~\ref{sec:algo_theory}), four additional features of the dataset were used to predict the relative performance of \newIGL:
\begin{itemize}
    \item $N$ (n)
    \item $N/\sqrt{K}$ (n\_by\_sqrt\_k)
    \item $N/K$ (n\_by\_k)
\end{itemize}

We used a binary random forest classifier to predict the success of AI-IGL's performance relative to CB. If the relative performance is $\geq0.7$, we label it as a success. In Table~\ref{tab:f1}, we report the F1 scores for the success and failure classes defined in this way for a binary random forest classifier with 100 trees (evaluated using 10-fold cross validation on all 271 OpenML datasets). Based on further analysis of the importance weight for each feature computed using the information gain metric (Figure~\ref{fig:importance_a}), we created smaller subsets of the datasets to further analyze datasets representative of realistic interaction datasets with small sample sizes. We also report the average F1 scores for random forest classifiers trained and evaluated in a similar manner in rows 2,3 in Table~\ref{tab:f1}.%

\begin{table*}[!htb]
    \centering
    \begin{tabular}{|c|c|c|c|}
    \hline
         Dataset properties & AI-IGL accuracy$<70\%$    & \begin{tabular}[x]{@{}c@{}}AI-IGL accuracy$\geq70\%$\\(Success) \end{tabular}  & \begin{tabular}[x]{@{}c@{}}Average F1 score\\(both classes)\end{tabular} \\
        \hline
        $K\geq3$ & 0.91 & 0.83 & 0.87\\
        \hline
        $N/K\leq1000$ & 0.95 & 0.46 & 0.71\\
        \hline
        $N/K\leq200$ & 0.97 & 0.56 & 0.77\\
        \hline
    \end{tabular}
    \caption{F1 scores of binary random forest classifiers predicting the success of AI-IGL relative to CB on different datasets using curated features/meta-properties.
    }
    \label{tab:f1}
\end{table*}

We also computed the feature importance for each of the random forest classifiers to determine which feature is most predictive of AI-IGL's success relative to CB. Feature importance was computed using the information gain metric. We used python's scikit-learn library to implement the random forest classifiers as well as feature importance values. 

We present feature importance plots for three different subsets of datasets in Figure~\ref{fig:importance}. We find $N/K$ to be the most predictive feature of AI-IGL's relative performance (Figure~\ref{fig:importance_a}).  It can alone predict its performance with an average F1 score of 0.79 under the same experimental setup. However, for datasets with a small value of $N/K$ ($\leq1000$, $\leq200$), there is high variability in relative performance. Using such a subset of datasets, we find maximum Fisher discriminant~\citep{lorena2019complex} (a measure of classification complexity that quantifies the informativeness of a given sample) to be the most predictive of relative performance (Figure~\ref{fig:importance_b}, Figure~\ref{fig:importance_c}). 

This finding identifies clear measures of small and large datasets that can predict whether AI-IGL can match CB performance for any given dataset. It can also help researchers improve the design of novel applications of IGL, e.g., in HCI and BCI, by ensuring the resulting dataset’s features are amenable to high performance.

\begin{figure*}[t!]
\begin{subfigure}{0.45\textwidth}
\centering
\includegraphics[width=0.97\textwidth]{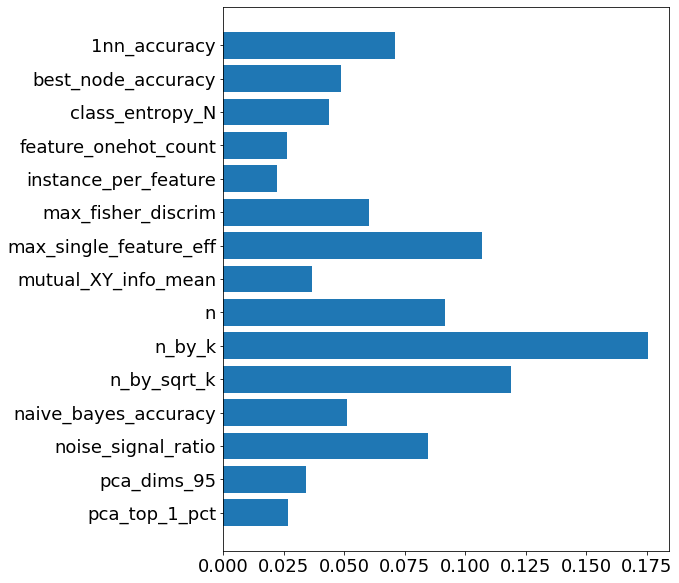}
\caption{Feature importance for 271 datasets with $K\geq3$)}
\label{fig:importance_a}
\end{subfigure}
~
\begin{subfigure}{0.45\textwidth}
\centering
\includegraphics[width=.97\textwidth]{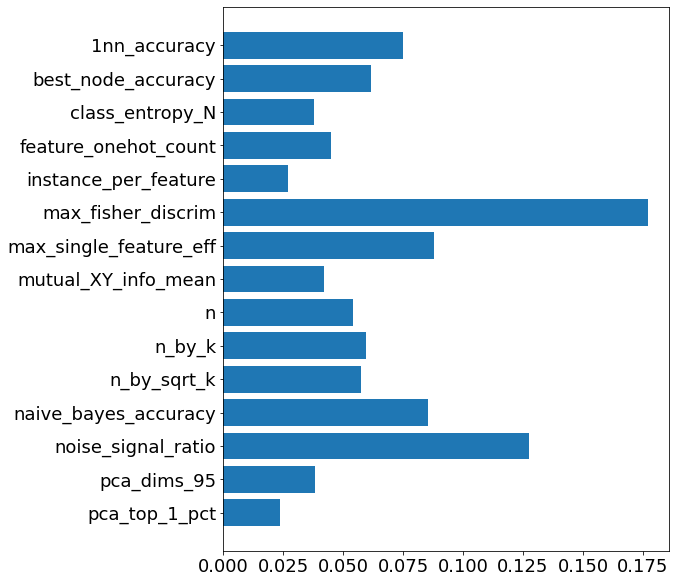}
\caption{Feature importance for 155 datasets with $N/K\leq1000$}
\label{fig:importance_b}
\end{subfigure}
~
\begin{subfigure}{0.45\textwidth}
\centering
\includegraphics[width=.97\textwidth]{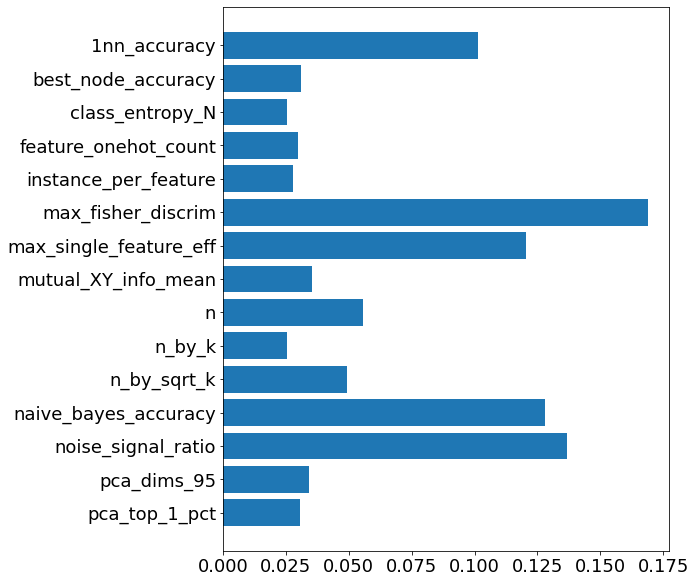}
\caption{Feature importance for 128 datasets with $N/K\leq200$}
\label{fig:importance_c}
\end{subfigure}
\caption{Feature importance over dataset meta-features used to classify the success of AI-IGL. We find that for across all datasets $N/K$ is the most informative feature (Figure~\ref{fig:importance_a}), whereas maximum Fisher discriminant ratio is the most important feature over datasets with smaller values of $N/K$ (Figure~\ref{fig:importance_b} and~\ref{fig:importance_c}).}
\label{fig:importance}
\end{figure*}

For the features considered relatively more important than others as shown in Figure~\ref{fig:importance_a}, \ref{fig:importance_b} and~\ref{fig:importance_c}), we also present how the relative performance of AI-IGL varies for different values of such features. Visualizations for different features across all 271 datasets are shown in Figure~\ref{fig:features_full}, across the subset of 155 datasets with $N/K\leq 1000$ are shown in Figure~\ref{fig:features_1000}, and across the subset of 128 datasets with $N/K\leq 200$ are shown in Figure~\ref{fig:features_200}.

\begin{figure*}[t!]
\begin{subfigure}{0.49\textwidth}
\centering
\includegraphics[width=0.95\textwidth]{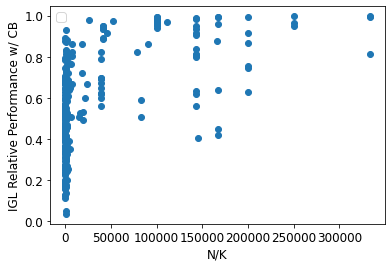}
\caption{}
\end{subfigure}
~
\begin{subfigure}{0.49\textwidth}
\centering
\includegraphics[width=.95\textwidth]{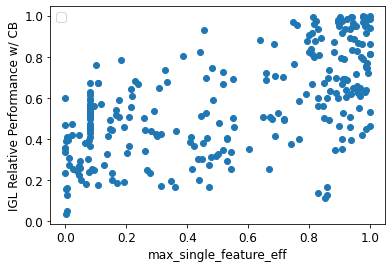}
\caption{}
\end{subfigure}
\\
\begin{subfigure}{0.49\textwidth}
\centering
\includegraphics[width=.95\textwidth]{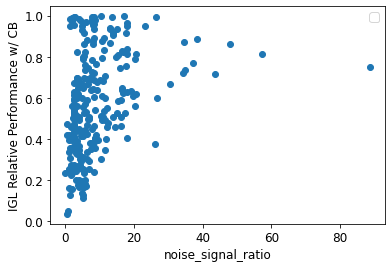}
\caption{}
\end{subfigure}
~
\begin{subfigure}{0.49\textwidth}
\centering
\includegraphics[width=.95\textwidth]{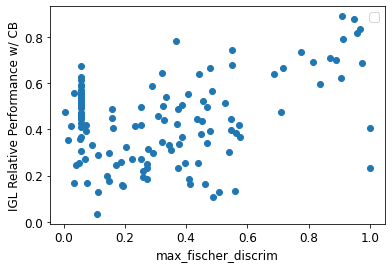}
\caption{}
\end{subfigure}
\\
\begin{subfigure}{0.49\textwidth}
\centering
\includegraphics[width=.95\textwidth]{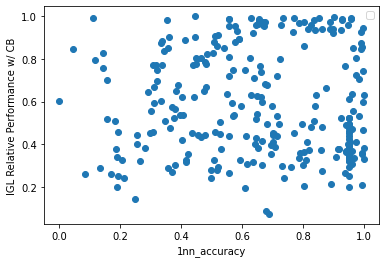}
\caption{}
\end{subfigure}
~
\begin{subfigure}{0.49\textwidth}
\centering
\includegraphics[width=.95\textwidth]{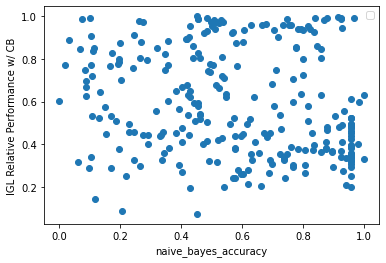}
\caption{}
\end{subfigure}
\\
\begin{subfigure}{0.49\textwidth}
\centering
\includegraphics[width=.95\textwidth]{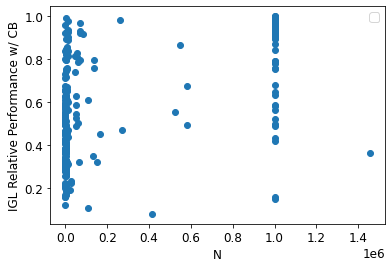}
\caption{}
\end{subfigure}
~
\begin{subfigure}{0.49\textwidth}
\centering
\includegraphics[width=.95\textwidth]{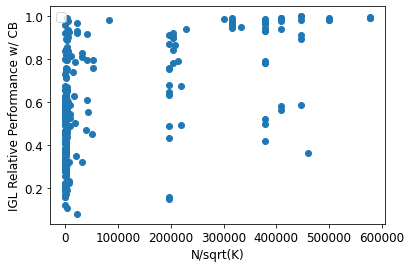}
\caption{}
\end{subfigure}
\caption{AI-IGL's relative performance w.r.t.~CB versus different feature values for all 271 datasets with $K\geq3$. $N/K$ is the most important feature for predicting AI-IGL's relative performance as shown in Figure~\ref{fig:importance_a}. Here we observe that the variability in AI-IGL's relative performance decreases as $N/K$ increases. Datasets with a smaller values of $N/K$ are analyzed separately (discussed in Figure~\ref{fig:features_1000} and Figure~\ref{fig:features_200}). }
\label{fig:features_full}
\end{figure*}

\begin{figure*}[t!]
\begin{subfigure}{0.49\textwidth}
\centering
\includegraphics[width=0.95\textwidth]{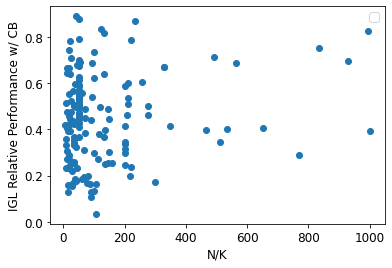}
\caption{}
\end{subfigure}
~
\begin{subfigure}{0.49\textwidth}
\centering
\includegraphics[width=.95\textwidth]{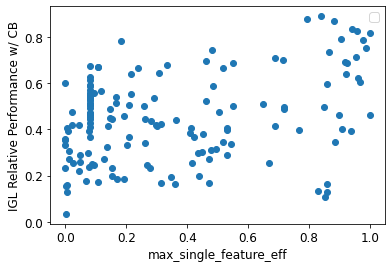}
\caption{}
\end{subfigure}
\\
\begin{subfigure}{0.49\textwidth}
\centering
\includegraphics[width=.95\textwidth]{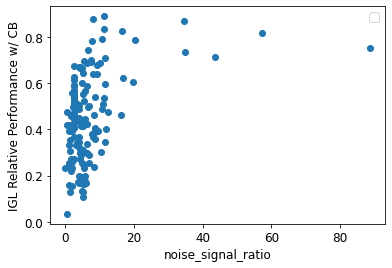}
\caption{}
\end{subfigure}
~
\begin{subfigure}{0.49\textwidth}
\centering
\includegraphics[width=.95\textwidth]{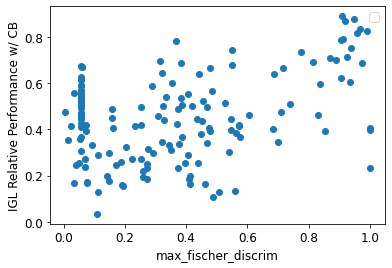}
\caption{}
\label{fig:features_1000_d}
\end{subfigure}
\\
\begin{subfigure}{0.49\textwidth}
\centering
\includegraphics[width=.95\textwidth]{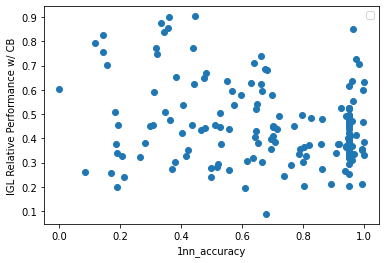}
\caption{}
\end{subfigure}
~
\begin{subfigure}{0.49\textwidth}
\centering
\includegraphics[width=.95\textwidth]{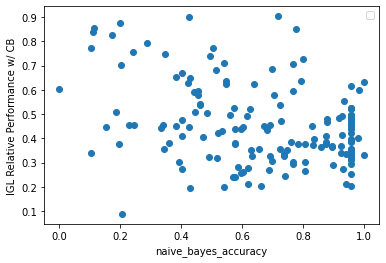}
\caption{}
\end{subfigure}
\\
\begin{subfigure}{0.49\textwidth}
\centering
\includegraphics[width=.95\textwidth]{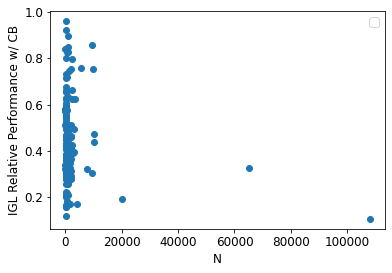}
\caption{}
\end{subfigure}
~
\begin{subfigure}{0.49\textwidth}
\centering
\includegraphics[width=.95\textwidth]{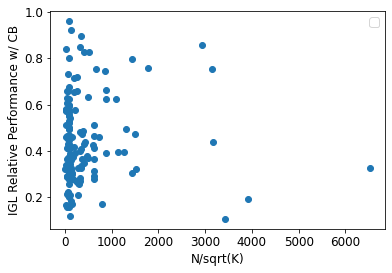}
\caption{}
\end{subfigure}
\caption{AI-IGL's relative performance w.r.t. CB versus different feature values for 155 datasets with $K\geq3$ and $N/K\leq1000$. $N/K$ itself is not the most important feature (Fig.~\ref{fig:importance_b}) for this subset of datasets, but the maximum Fisher discriminant ratio is which shows an approximately linear trend (Figure~\ref{fig:features_1000_d}).}
\label{fig:features_1000}
\end{figure*}

\begin{figure*}[t!]
\begin{subfigure}{0.49\textwidth}
\centering
\includegraphics[width=0.95\textwidth]{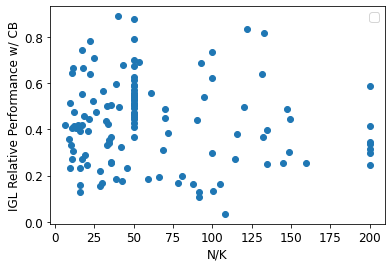}
\caption{}
\end{subfigure}
~
\begin{subfigure}{0.49\textwidth}
\centering
\includegraphics[width=.95\textwidth]{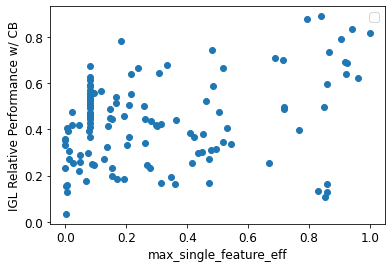}
\caption{}
\end{subfigure}
\\
\begin{subfigure}{0.49\textwidth}
\centering
\includegraphics[width=.95\textwidth]{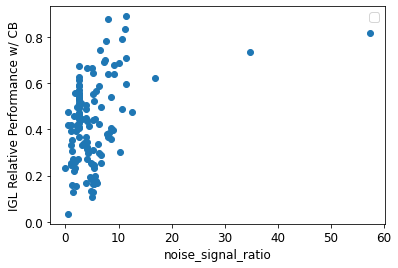}
\caption{}
\end{subfigure}
~
\begin{subfigure}{0.49\textwidth}
\centering
\includegraphics[width=.95\textwidth]{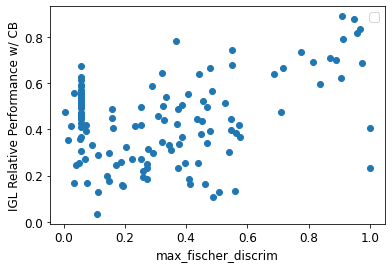}
\caption{}
\label{fig:features_200_d}
\end{subfigure}
\\
\begin{subfigure}{0.49\textwidth}
\centering
\includegraphics[width=.95\textwidth]{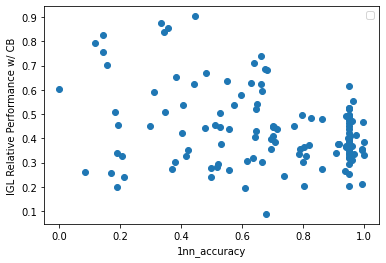}
\caption{}
\end{subfigure}
~
\begin{subfigure}{0.49\textwidth}
\centering
\includegraphics[width=.95\textwidth]{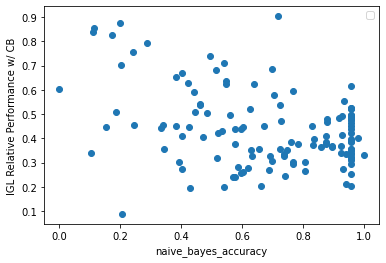}
\caption{}
\end{subfigure}
\\
\begin{subfigure}{0.49\textwidth}
\centering
\includegraphics[width=.95\textwidth]{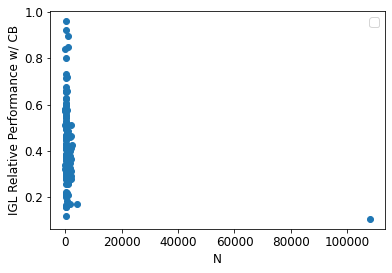}
\caption{}
\end{subfigure}
~
\begin{subfigure}{0.49\textwidth}
\centering
\includegraphics[width=.95\textwidth]{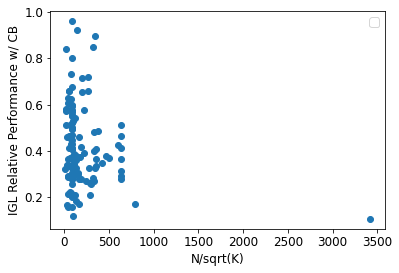}
\caption{}
\end{subfigure}
\caption{AI-IGL's relative performance w.r.t.~CB versus different feature values for 128 datasets with $K\geq3$ and $N/K\leq200$. $N/K$ itself has a small importance value (Figure~\ref{fig:importance_c}) for this subset of datasets. The maximum Fisher discriminant ratio is again the most important feature for this subset of datasets (showing an approximately linear trend Figure~\ref{fig:features_200_d}) to predict the relative performance of AI-IGL.}
\label{fig:features_200}
\end{figure*}

\end{document}